\def\f{\tilde{f}}
\def\eg{\emph{e.g.}}
\def\ie{\emph{i.e.}}
\def\vs{\vspace*{-0.1cm}}
\def\sigtot{\sigma_{\text{tot}}}
\def\sigp{\sigma_p}
\def\sigpi{\sigma_i}
\def\sigpq{\sigma_q}
\DeclareMathOperator{\prox}{prox}
\DeclareMathOperator{\R}{\mathbb{R}}
\DeclareMathOperator{\E}{\mathbb{E}}
\newtheorem{theorem}{Theorem}
\newtheorem{proposition}[theorem]{Proposition}
\newtheorem{lemma}[theorem]{Lemma}
\newtheorem{appxproposition}{Proposition}[section]
\newtheorem{appxlemma}{Lemma}[section]
\newtheorem{appxtheorem}{Theorem}[section]
\title{Stochastic Optimization with Variance Reduction\\
           for Infinite Datasets with Finite Sum Structure}
\author{
  Alberto~Bietti \\
  Inria\thanks{Univ. Grenoble Alpes, Inria, CNRS, Grenoble INP, LJK, 38000 Grenoble, France} \\
  \texttt{alberto.bietti@inria.fr} \\
  \And
  Julien~Mairal \\
  Inria\footnotemark[1] \\
  \texttt{julien.mairal@inria.fr} \\
}
\begin{document} 
\maketitle

\begin{abstract}
Stochastic optimization algorithms with variance reduction have proven
successful for minimizing large finite sums of functions. Unfortunately, these
techniques are unable to deal with stochastic perturbations of input data,
induced for example by data augmentation. In such cases, the objective is no
longer a finite sum, and the main candidate for optimization is the stochastic
gradient descent method (SGD). In this paper, we introduce a variance reduction
approach for these settings when the objective is composite and strongly convex. 
The convergence rate outperforms SGD with a typically much smaller constant
factor, which depends on the variance of gradient estimates only due to
perturbations on a \emph{single} example.

\end{abstract}

\section{Introduction}
\label{sec:introduction}
Many supervised machine learning problems can be cast as the minimization of an expected loss over a data distribution with respect to a vector~$x$ in~$\R^p$ of model parameters. When an infinite amount of data is available, stochastic optimization methods such as SGD or stochastic mirror descent algorithms, or their variants, are typically used (see~\cite{bottou_optimization_2016,duchi2009efficient,nemirovski_robust_2009,xiao2010dual}). Nevertheless, when the dataset is finite, incremental methods based on variance reduction techniques (\eg,~\cite{allen2016katyusha,defazio_saga:_2014,johnson_accelerating_2013,lan2015optimal,lin_universal_2015,schmidt_minimizing_2013,shalev-shwartz_stochastic_2013}) have proven to be significantly faster at solving the finite-sum problem
\begin{equation}
\label{eq:finite_sum}
\min_{x \in \R^p} \Big\{ F(x) := f(x) + h(x) = \frac{1}{n} \sum_{i=1}^n f_i(x) + h(x) \Big\},
\end{equation}
where the functions~$f_i$ are smooth and convex, and~$h$ is a simple convex penalty that need not be differentiable such as the~$\ell_1$ norm. A classical setting is $f_i(x) = \ell(y_i, x^\top \xi_i) + (\mu/2) \|x\|^2$, where $(\xi_i, y_i)$ is an example-label pair, $\ell$ is a convex loss function, and $\mu$ is a regularization parameter.

In this paper, we are interested in a variant of~(\ref{eq:finite_sum}) where random perturbations of data are introduced, which is a common scenario in machine learning.
Then, the functions $f_i$ involve an expectation over a random perturbation~$\rho$, leading to the problem
\begin{equation}
\label{eq:augm_objective}
   \min_{x \in \R^p} \Big\{ F(x) := \frac{1}{n} \sum_{i=1}^n f_i(x) + h(x) \Big\}. ~~~~\text{with}~~~~ f_i(x) = \E_{\rho} [\f_i(x, \rho)].
\end{equation}

Unfortunately, variance reduction methods are not
compatible with the setting~(\ref{eq:augm_objective}),
since evaluating a single gradient~$\nabla f_i(x)$ requires computing a full expectation.
Yet,
dealing with random perturbations is of utmost interest;
for instance, this is a key to achieve stable feature selection~\cite{meinshausen2010stability}, 
improving the generalization error both in theory~\cite{wager_altitude_2014} and
in practice~\cite{loosli2007training,maaten2013learning}, obtaining stable and
robust predictors~\cite{zheng2016improving}, or using complex a priori knowledge about data
to generate virtually larger
datasets~\cite{loosli2007training,paulin2014transformation,simard_transformation_1998}.
Injecting noise in data is also useful to hide gradient information for privacy-aware
learning~\cite{duchi2012privacy}.

Despite its importance, the optimization problem~(\ref{eq:augm_objective}) has
been littled studied and to the best of our knowledge, no dedicated optimization
method that is able to exploit the problem structure has been developed so far.
A natural way to optimize this objective when $h \!=\! 0$ is indeed SGD, but
ignoring the finite-sum structure leads to gradient estimates with high
variance and slow convergence. The goal of this paper is to introduce an
algorithm for strongly convex objectives, called \emph{stochastic MISO}, which exploits the underlying finite sum using variance reduction. Our method achieves a faster convergence
rate than SGD, by removing the dependence on the gradient variance due to sampling
the data points~$i$ in $\{1,\ldots,n\}$; the dependence remains only for the variance due to random perturbations~$\rho$.

To the best of our knowledge, our method is the first algorithm that interpolates naturally between incremental methods for finite sums (when there are no perturbations) and the stochastic approximation setting (when $n \!=\! 1$), while being able to efficiently tackle the hybrid case.

\vs
\paragraph{Related work.}

Many optimization methods dedicated to the finite-sum problem (\eg, \cite{johnson_accelerating_2013,shalev-shwartz_stochastic_2013}) have been motivated by the fact that their updates can be interpreted as SGD steps with unbiased estimates of the full gradient, but with a variance that decreases as the algorithm approaches the optimum~\cite{johnson_accelerating_2013}; on the other hand, vanilla SGD requires decreasing step-sizes to achieve this reduction of variance, thereby slowing down convergence.
Our work aims at extending these techniques to the case where each function in the finite sum can only be accessed via a first-order stochastic oracle.

Most related to our work, recent methods that use data clustering to
accelerate variance reduction
techniques~\cite{allen-zhu_exploiting_2016,hofmann_variance_2015} can be seen
as tackling a special case of~\eqref{eq:augm_objective}, where the expectations
in $f_i$ are replaced by empirical averages over points in a cluster.  While
N-SAGA~\cite{hofmann_variance_2015} was originally not designed for the stochastic context we
consider, we remark that their method can be applied
to~(\ref{eq:augm_objective}).  Their algorithm is however
asymptotically biased and does not converge to the optimum. On the other hand,
ClusterSVRG~\cite{allen-zhu_exploiting_2016} is not biased, but does not
support infinite datasets. The method proposed in~\cite{achab_sgd_2015} uses
variance reduction in a setting where gradients are computed approximately, but
the algorithm computes a full gradient at every pass, which is not available in
our stochastic setting.

\vs
\paragraph{Paper organization.}
In Section~\ref{sec:the_stochastic_miso_algorithm}, we present our algorithm for smooth objectives, and we analyze its convergence in Section~\ref{sec:convergence_analysis}. For space limitation reasons, we present an extension to composite objectives and non-uniform sampling in Appendix~\ref{sec:extensions}. Section~\ref{sec:experiments} is devoted to empirical results.

\section{The Stochastic MISO Algorithm for Smooth Objectives} 
\label{sec:the_stochastic_miso_algorithm}
\begin{table}[bt]
   \caption{Iteration complexity of different methods for solving the objective~\eqref{eq:augm_objective} in terms of number of iterations required to find~$x$ such that $\E[f(x) - f(x^*)] \leq \epsilon$. The complexity of N-SAGA~\cite{hofmann_variance_2015} matches the first term of S-MISO but is asymptotically biased. Note that we always have the perturbation noise variance $\sigp^2$ smaller than the total variance $\sigtot^2$ and thus S-MISO improves on SGD both in the first term (linear convergence to a smaller~$\bar{\epsilon}$) and in the second (smaller constant in the asymptotic rate). In many application cases, we also have $\sigp^2 \ll \sigtot^2$ (see main text and Table~\ref{table:variance_comp}).}
\label{table:complexity_comp}
\vs
\vs
\begin{center}
\renewcommand{\arraystretch}{1.2}
\begin{tabular}{| l | c | c |}
\hline
Method & Asymptotic error & Iteration complexity \\
\hline
SGD & 0
  & $\displaystyle O\left(\frac{L}{\mu} \log \frac{1}{\bar{\epsilon}} ~~+~~ \frac{\sigtot^2}{\mu \epsilon}\right)$  \quad with~~~$\displaystyle \bar{\epsilon} = O\left(\frac{\sigtot^2}{\mu}\right)$\\ \hline
   N-SAGA~\cite{hofmann_variance_2015} & $\displaystyle \epsilon_0 = O\left(\frac{{\color{blue} \sigp^2}}{\mu}\right)$
   & $\displaystyle O\left(\left(n + \frac{L}{\mu}\right) \log \frac{1}{\epsilon}\right)$ \quad with~~~$\epsilon > \epsilon_0$ \\ \hline
S-MISO & 0
   & $\displaystyle O\left(\left(n + \frac{L}{\mu}\right) \log \frac{1}{\bar{\epsilon}} ~~+~~ \frac{{\color{blue} \sigp^2}}{\mu \epsilon}\right)$  \quad with~~~$\displaystyle \bar{\epsilon} = O\left(\frac{{\color{blue} \sigp^2}}{\mu}\right)$ \\
\hline
\end{tabular}
\vs
\vs
\vs
\vs
\end{center}
\end{table}

\begin{table}[tb]
\caption{Estimated ratio $\sigtot^2/\sigp^2$, which corresponds to the expected acceleration of S-MISO over SGD. These numbers are based on feature vectors variance, which is closely related to the gradient variance when learning a linear model. 
ResNet-50 denotes a 50 layer
network~\cite{he2016deep} pre-trained on the ImageNet dataset.
For image transformations, the numbers are empirically evaluated from 100 different images, with 100 random perturbations for each image.
 $R^2_{\text{tot}}$ (respectively,~$R^2_{\text{cluster}}$) denotes the average squared distance between pairs of points
   in the dataset (respectively, in a given cluster), following~\cite{hofmann_variance_2015}.
   The settings for unsupervised CKN and Scattering are described in Section~\ref{sec:experiments}. More details are given in the main text.
   }
\vs
\vs
\begin{center}
   \begin{tabular}{|c|l|cl|}
\hline
      {\bfseries Type of perturbation}   & {\bfseries Application case}  & \multicolumn{2}{c|}{{\bfseries Estimated ratio} $\sigtot^2 / \sigp^2$}  \\
\hline
      \multirow{4}{3cm}{Direct perturbation of linear model features}   &  Data clustering as in~\cite{allen-zhu_exploiting_2016,hofmann_variance_2015} & ~~$\approx$~~ &  $R_{\text{tot}}^2 / R_{\text{cluster}}^2$ \\
      & Additive Gaussian noise~${\mathcal N}(0, \alpha^2 I)$ & ~~$\approx$~~ & $1+ 1 / \alpha^2$ \\
   & Dropout with probability~$\delta$ & ~~$\approx$~~ & $ 1 + 1/ \delta$ \\
      & Feature rescaling by $s$ in ${\mathcal U}(1-w,1+w)$ & ~~$\approx$~~ & $1 + 3/ w^2$ \\
\hline
      \multirow{4}{3cm}{Random image transformations}  &ResNet-50~\cite{he2016deep}, color perturbation & & 21.9 \\
      &ResNet-50~\cite{he2016deep}, rescaling + crop & & 13.6 \\
      &Unsupervised CKN~\cite{mairal_end--end_2016}, rescaling + crop & & 9.6 \\
      &Scattering~\cite{bruna2013invariant}, gamma correction & &  9.8 \\
\hline
\end{tabular}
\vs
\vs
\vs
\vs
\end{center}
\label{table:variance_comp}
\end{table}

In this section, we introduce the \emph{stochastic MISO} approach for smooth objectives ($h = 0$), which relies on the following assumptions:
\begin{itemize}[noitemsep,topsep=0pt,leftmargin=5mm]
  \item (A1) {\bfseries global strong convexity}: $f$ is $\mu$-strongly convex;
  \item (A2) {\bfseries smoothness}: $\f_i(\cdot, \rho)$ is $L$-smooth for all $i$ and $\rho$ (\ie, with $L$-Lipschitz gradients).
\end{itemize}
Note that these assumptions are relaxed in Appendix~\ref{sec:extensions} by supporting composite objectives and by exploiting different smoothness parameters~$L_i$ on each example, a setting where non-uniform sampling of the training points is typically helpful to accelerate convergence (\eg,~\cite{xiao2014proximal}).

\paragraph{Complexity results.}
We now introduce the following quantity, which is essential in our analysis:
\begin{equation*}
\sigp^2 := \frac{1}{n} \sum_{i=1}^n \sigpi^2, \text{~~with~~} \sigpi^2 := \E_\rho \left[ \| \nabla \f_i(x^*,\rho) - \nabla f_i(x^*) \|^2 \right],
\end{equation*}
where $x^*$ is the (unique) minimizer of $f$. The quantity~$\sigp^2$ represents the part of the variance of the gradients at the optimum that is due to the perturbations~$\rho$.
In contrast, another quantity of interest is the total variance $\sigtot^2$, which also includes
the randomness in the choice of the index~$i$, defined as
\begin{equation*}
\sigtot^2 = \E_{i,\rho}[\| \nabla \f_i(x^*,\rho)\|^2] 
   = \sigp^2 + \E_i [\|\nabla f_i(x^*) \|^2]~~~~\quad\quad~~~(\text{note that~}  \nabla f(x^*)=0).
\end{equation*}
The relation between $\sigtot^2$ and $\sigp^2$ is obtained by simple algebraic manipulations. 

The goal of our paper is to exploit the potential imbalance $\sigp^2 \ll \sigtot^2$, occurring when perturbations on input data are small compared to the sampling noise. The assumption is reasonable: given a data point, selecting a different one should lead to larger variation than a simple perturbation. From a theoretical point of view, the approach we propose achieves the iteration complexity presented in Table~\ref{table:complexity_comp}, see also Appendix~\ref{sec:sgd_recursions} and~\cite{bach2011non,bottou_optimization_2016,nemirovski_robust_2009} for the complexity analysis of SGD. The gain over SGD is of order $\sigtot^2/\sigp^2$, which is also observed in our experiments in Section~\ref{sec:experiments}.
We also compare against the method N-SAGA; its convergence rate is similar to ours but suffers from a non-zero asymptotic error.

\vs
\paragraph{Motivation from application cases.}
One clear framework of application is the data clustering scenario already
investigated in~\cite{allen-zhu_exploiting_2016,hofmann_variance_2015}. Nevertheless, we will focus on less-studied data augmentation
settings that lead instead to true stochastic formulations such as~(\ref{eq:augm_objective}). 
First, we consider learning a linear model when adding
simple direct manipulations of feature
vectors, via rescaling (multiplying each entry vector by a random scalar),
Dropout, or additive Gaussian noise, in order to improve the generalization error~\cite{wager_altitude_2014} or to 
get more stable estimators~\cite{meinshausen2010stability}.
In Table~\ref{table:variance_comp}, we present the potential gain
 over SGD in these scenarios. To do that, we study the
variance of perturbations applied to a feature vector $\xi$. Indeed, the
gradient of the loss is proportional to $\xi$, which  allows us to obtain good estimates of the
ratio $\sigtot^2/\sigp^2$, as we observed in our empirical study of
Dropout presented in Section~\ref{sec:experiments}. Whereas some perturbations
are friendly for our method such as feature rescaling (a rescaling window of $[0.9, 1.1]$ yields
for instance a huge gain factor of $300$), a large Dropout rate would lead to less impressive acceleration (\eg, a Dropout with $\delta=0.5$ simply yields a factor $2$).

Second, we also consider more interesting domain-driven data
perturbations such as classical image transformations considered in computer
vision~\cite{paulin2014transformation,zheng2016improving} including image
cropping, rescaling, brightness, contrast, hue, and saturation changes.  These
transformations may be used to train a linear classifier on top of an
unsupervised multilayer image model such as unsupervised CKNs~\cite{mairal_end--end_2016} or
the scattering transform~\cite{bruna2013invariant}. It may also be used for retraining
the last layer of a pre-trained deep neural network: given a new task unseen
during the full network training and given limited amount of training data, data
augmentation may be indeed crucial to obtain good prediction and
S-MISO can help accelerate learning in this setting.
These scenarios are also studied in Table~\ref{table:variance_comp}, where
the experiment with ResNet-50 involving random cropping and rescaling
produces $224\times 224$ images from $256\times256$ ones.
For these scenarios with realistic perturbations, the potential gain varies from $10$ to $20$.

\begin{algorithm}[tb]
\caption{S-MISO for smooth objectives}
\label{alg:smiso}
\begin{algorithmic}
\STATE {\bfseries Input:} step-size sequence $(\alpha_t)_{t\geq 1}$;
\STATE initialize $x_0 = \frac{1}{n}\sum_i z_i^0$ for some $(z_i^0)_{i=1,\ldots,n}$;\\
\FOR{$t = 1, \ldots$}
  \STATE Sample an index~$i_t$ uniformly at random, a perturbation~$\rho_t$, and update
  \begin{align}z_i^t &= 
  \begin{cases}
    (1-\alpha_t)z_i^{t-1} + \alpha_t (x_{t-1} - \frac{1}{\mu} \nabla \f_{i_t}(x_{t-1}, \rho_t)), &\text{if }i = i_t\\
    z_i^{t-1}, &\text{otherwise.}
  \end{cases} \label{eq:z_update}\\
  x_t &= \frac{1}{n}\sum_{i=1}^n z_i^t = x_{t-1} + \frac{1}{n}(z_{i_t}^t - z_{i_t}^{t-1}). \label{eq:x_update}
  \end{align}
\ENDFOR
\end{algorithmic}
\end{algorithm}

\paragraph{Description of stochastic MISO.}
We are now in shape to present our method, described in
Algorithm~\ref{alg:smiso}.
Without perturbations and with a constant step-size, the algorithm resembles the MISO/Finito algorithms~\cite{defazio_finito:_2014,lin_universal_2015,mairal_incremental_2015}, which may be seen as primal variants of SDCA~\cite{shalev-shwartz_sdca_2016,shalev-shwartz_stochastic_2013}. 
Specifically, MISO is not able to deal with our stochastic objective~(\ref{eq:augm_objective}), but it may address the deterministic finite-sum problem~(\ref{eq:finite_sum}).
It is part of a larger body of optimization methods that iteratively build a \emph{model} of the objective function, typically a lower or upper bound on the objective that is easier to optimize; for instance, this strategy is commonly adopted in bundle methods~\cite{hiriart2013convex,nesterov_lectures_2004}. 

More precisely, MISO assumes that each~$f_i$ is strongly convex and builds a model using lower bounds $D_t(x) = \frac{1}{n} \sum_{i=1}^n d_i^t(x)$, where each~$d_i^t$ is a quadratic lower bound on~$f_i$ of the form
\begin{equation}
\label{eq:d_i}
d_i^t(x) = c_{i,1}^t + \frac{\mu}{2}\|x-z_i^t\|^2 = c_{i,2}^t - \mu \langle x, z_i^t \rangle + \frac{\mu}{2} \|x\|^2.
\end{equation}
These lower bounds are updated during the algorithm using strong convexity lower bounds at~$x_{t-1}$ of the form $l_i^t(x) = f_i(x_{t-1}) + \langle \nabla f_i(x_{t-1}), x - x_{t-1} \rangle + \frac{\mu}{2}\|x - x_{t-1}\|^2 \leq f_i(x)$:
\begin{equation}
\label{eq:lb_update}
d_i^t(x) = 
\begin{cases}
  (1-\alpha_t)d_i^{t-1}(x) + \alpha_t l_i^t(x), &\text{ if }i = i_t\\
  d_i^{t-1}(x), &\text{ otherwise,}
\end{cases}
\end{equation}
which corresponds to an update of the quantity~$z_i^t$:
\begin{equation*}
z_i^t = 
  \begin{cases}
    (1-\alpha_t)z_i^{t-1} + \alpha_t ( x_{t-1} - \frac{1}{\mu} \nabla f_{i_t}(x_{t-1})), &\text{ if }i = i_t\\
    z_i^{t-1}, &\text{ otherwise.}
  \end{cases}
\end{equation*}
The next iterate is then computed as $x_t = \arg\min_x D_t(x)$, which is equivalent to~\eqref{eq:x_update}.
The original MISO/Finito algorithms use $\alpha_t \!=\! 1$ under a ``big data''
condition on the sample size~$n$~
\cite{defazio_finito:_2014,mairal_incremental_2015}, while the theory was later extended in~\cite{lin_universal_2015} to relax this condition by supporting smaller constant steps~$\alpha_t = \alpha$, leading to an algorithm that may be interpreted as a primal variant of SDCA (see~\cite{shalev-shwartz_sdca_2016}).

Note that when $f_i$ is an expectation, it is hard to obtain such lower bounds since the gradient $\nabla f_i(x_{t-1})$ is not available in general.
For this reason, we have introduced S-MISO, which can exploit \emph{approximate} lower bounds to each~$f_i$ using gradient estimates, by letting the step-sizes~$\alpha_t$ decrease appropriately as commonly done in stochastic approximation. This leads to update~\eqref{eq:z_update}.

Separately, SDCA~\cite{shalev-shwartz_stochastic_2013} considers the Fenchel conjugates of~$f_i$, defined by $f^*_i(y) = \sup_x x^\top y - f_i(x)$. When~$f_i$ is an expectation, $f^*_i$ is not available in closed form in general, nor are its gradients, and in fact exploiting stochastic gradient estimates is difficult in the duality framework. In contrast, \cite{shalev-shwartz_sdca_2016} gives an analysis of SDCA in the primal, aka. ``without duality'', for smooth finite sums, and our work extends this line of reasoning to the stochastic approximation and composite settings.

\vs
\paragraph{Relationship with SGD in the smooth case.}
The link between S-MISO in the non-composite setting and SGD can be seen by rewriting the update~\eqref{eq:x_update} as
\[
x_t = x_{t-1} + \frac{1}{n}(z_{i_t}^t - z_{i_t}^{t-1}) = x_{t-1} + \frac{\alpha_t}{n} v_t,
\]
\vspace*{-0.1cm}
where
\vspace*{-0.15cm}
\begin{equation}
\label{eq:vt}
  v_t := x_{t-1} - \frac{1}{\mu} \nabla \f_{i_t}(x_{t-1}, \rho_t) - z_{i_t}^{t-1}.
\end{equation}
Note that $\E[v_t|\mathcal{F}_{t-1}] = -\frac{1}{\mu} \nabla f(x_{t-1})$, where $\mathcal{F}_{t-1}$ contains all information up to iteration~$t$; hence, the algorithm can be seen as an instance of the stochastic gradient method with unbiased gradients, which was a key motivation in SVRG~\cite{johnson_accelerating_2013} and later in other variance reduction algorithms~\cite{defazio_saga:_2014,shalev-shwartz_sdca_2016}. It is also worth noting that in the absence of a finite-sum structure ($n \! = \! 1$), we have $z_{i_t}^{t-1} \!= \! x_{t-1}$; hence our method becomes identical to SGD, up to a redefinition of step-sizes. In the composite case (see Appendix~\ref{sec:extensions}), our approach yields a new algorithm that resembles regularized dual averaging~\cite{xiao2010dual}.

\vs
\paragraph{Memory requirements and handling of sparse datasets.}
The algorithm requires storing the vectors $(z_i^t)_{i=1,\ldots,n}$, which takes the same amount of memory as the original dataset and which is therefore a reasonable requirement in many practical cases. In the case of sparse datasets, it is fair to assume that random perturbations applied to input data preserve the sparsity patterns of the original vectors, as is the case, \eg,~when applying Dropout to text documents described with bag-of-words representations~\cite{wager_altitude_2014}. If we further assume the typical setting where the $\mu$-strong convexity comes from an $\ell_2$ regularizer: $\f_i(x, \rho) = \phi_i(x^\top \xi_i^\rho) + (\mu / 2) \|x\|^2$, where $\xi_i^\rho$ is the (sparse) perturbed example and $\phi_i$ encodes the loss, then the update~\eqref{eq:z_update} can be written as
\begin{equation*}
  z_i^t =
  \begin{cases}
    (1-\alpha_t)z_i^{t-1} - \frac{\alpha_t}{\mu} \phi_i'(x_{t-1}^\top \xi_i^{\rho_t}) \xi_i^{\rho_t}, &\text{if }i = i_t\\
    z_i^{t-1}, &\text{otherwise,}
  \end{cases}
\end{equation*}
which shows that for every index~$i$, the vector~$z_i^t$ preserves the same sparsity pattern as the examples~$\xi_i^\rho$ throughout the algorithm (assuming the initialization $z_i^0 = 0$), making the update~\eqref{eq:z_update} efficient. The update~\eqref{eq:x_update} has the same cost since $v_t = z_{i_t}^t - z_{i_t}^{t-1}$ is also sparse.

\vs
\paragraph{Limitations and alternative approaches.}
Since our algorithm is uniformly better than SGD in terms of iteration
complexity, its main limitation is in terms of memory storage when the dataset
cannot fit into memory (remember that the memory cost of S-MISO is the same as
the input dataset). In these huge-scale settings, SGD should be preferred;
this holds true in fact for all incremental methods when one cannot afford to
perform more than one (or very few) passes over the data. Our paper focuses instead
on non-huge datasets, which are those benefiting most from data augmentation.

We note that a different approach to variance reduction like SVRG~\cite{johnson_accelerating_2013} is able to
trade off storage requirements for additional full gradient computations, which would be
desirable in some situations. However, we were not able to obtain any decreasing
step-size strategy that works for these methods, both in theory and practice, leaving us with
constant step-size approaches as in~\cite{achab_sgd_2015,hofmann_variance_2015}
that either maintain a non-zero asymptotic error, or require dynamically
reducing the variance of gradient estimates. One possible way to explain this
difficulty is that SVRG and SAGA~\cite{defazio_saga:_2014} ``forget'' past gradients for a given example~$i$,
while S-MISO averages them in~\eqref{eq:z_update}, which seems to be a technical key to make it suitable
to stochastic approximation. Nevertheless, the question of whether it is
possible to trade-off storage with computation in a setting like ours is open and of utmost interest.

\section{Convergence Analysis of S-MISO} 
\label{sec:convergence_analysis}

We now study the convergence properties of the S-MISO algorithm. For space limitation reasons, all proofs are provided in Appendix~\ref{sec:smooth_proofs}. We start by defining the problem-dependent quantities $z_i^* := x^* - \frac{1}{\mu} \nabla f_i(x^*)$, and then introduce the Lyapunov function
\begin{equation}
\label{eq:c_t}
  C_t = \frac{1}{2} \|x_t - x^*\|^2 + \frac{\alpha_t}{n^2} \sum_{i=1}^{n} \|z_i^t - z_i^* \|^2.
\end{equation}
Proposition~\ref{prop:c_rec} gives a recursion on~$C_t$,
obtained by upper-bounding separately its two terms, and
finding coefficients to cancel out other appearing quantities when
relating~$C_t$ to~$C_{t-1}$.
To this end, we borrow elements of the
convergence proof of SDCA without duality~\cite{shalev-shwartz_sdca_2016};
our technical contribution is to extend their result to the stochastic approximation and composite (see Appendix~\ref{sec:extensions}) cases.

\begin{proposition}[Recursion on $C_t$]
\label{prop:c_rec}
If $(\alpha_t)_{t\geq 1}$ is a positive and non-increasing sequence satisfying
\begin{equation}
\label{eq:alpha_condition}
\alpha_1 \leq \min \left\{ \frac{1}{2}, \frac{n}{2(2 \kappa - 1)} \right\},
\end{equation}
with $\kappa = L / \mu$, then $C_t$ obeys the recursion
\begin{equation}
\label{eq:c_rec}
  \E [C_t] \leq \left( 1 - \frac{\alpha_t}{n} \right) \E [C_{t-1}] + 2 \left( \frac{\alpha_t}{n} \right)^2 \frac{\sigp^2}{\mu^2}.
\end{equation}
\end{proposition}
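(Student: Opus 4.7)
The plan is to bound $\E[\|x_t-x^*\|^2]$ and $\E[\sum_i\|z_i^t-z_i^*\|^2]$ separately and then combine them using the weight $\alpha_t/n^2$ prescribed by~\eqref{eq:c_t}. With $v_t$ defined in~\eqref{eq:vt}, I decompose $v_t = u_t + \zeta_t$ where $u_t := x_{t-1} - \mu^{-1}\nabla f_{i_t}(x_{t-1}) - z_{i_t}^{t-1}$ depends only on $i_t$ given $\mathcal{F}_{t-1}$, and $\zeta_t := -\mu^{-1}(\nabla \f_{i_t}(x_{t-1},\rho_t)-\nabla f_{i_t}(x_{t-1}))$ has zero conditional mean given $i_t$. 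This separation isolates the sampling noise (the draw of $i_t$, which will be absorbed into the MISO-style Lyapunov contraction) from the perturbation noise (the draw of $\rho_t$, which is what ultimately produces the $\sigma_p^2$ residual).

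For the first half of $C_t$, I expand
\begin{equation*}
\|x_t-x^*\|^2 = \|x_{t-1}-x^*\|^2 + \frac{2\alpha_t}{n}\langle x_{t-1}-x^*,v_t\rangle + \frac{\alpha_t^2}{n^2}\|v_t\|^2,
\end{equation*}
take conditional expectation, use $\E[v_t\mid\mathcal{F}_{t-1}] = -\mu^{-1}\nabla f(x_{t-1})$, and apply $\mu$-strong convexity to the cross term; this yields a $-\alpha_t/n$ contraction on $\|x_{t-1}-x^*\|^2$ together with a negative $(f(x_{t-1})-f(x^*))$ slack. For the second half, only the $i_t$-th summand is updated, and since $z_{i_t}^t$ is a convex combination of $z_{i_t}^{t-1}$ and $w_{i_t}^{\rho_t} := x_{t-1} - \mu^{-1}\nabla\f_{i_t}(x_{t-1},\rho_t)$, convexity of the squared norm gives
\begin{equation*}
\|z_{i_t}^t - z_{i_t}^*\|^2 \leq (1-\alpha_t)\|z_{i_t}^{t-1}-z_{i_t}^*\|^2 + \alpha_t\|w_{i_t}^{\rho_t}-z_{i_t}^*\|^2.
\end{equation*}
Averaging over $(i_t,\rho_t)$ produces a $(1-\alpha_t/n)$ contraction on $\sum_i\|z_i^{t-1}-z_i^*\|^2$ plus $\alpha_t\,\E\|w_{i_t}^{\rho_t}-z_{i_t}^*\|^2$; expanding the latter with $z_i^* = x^* - \mu^{-1}\nabla f_i(x^*)$, then applying strong convexity on the cross term and exploiting the zero conditional mean of $\zeta_t$ yields another negative $(f(x_{t-1})-f(x^*))$ slack, the clean piece $\mu^{-2}\sigma_p^2$, plus smoothness-controlled residuals in $\|x_{t-1}-x^*\|^2$ and $\langle\nabla f(x_{t-1}),x_{t-1}-x^*\rangle$.

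I then add the two bounds using the coefficient $\alpha_t/n^2$ on the $z$-sum. The crucial bookkeeping is collecting all contributions to $(f(x_{t-1})-f(x^*))$: the dominant negative $-2\alpha_t/(n\mu)$ from the $x$-expansion must absorb a positive piece of order $\alpha_t^2(2\kappa-1)/(n^2\mu)$ that comes from bounding $\|\nabla f_i(x_{t-1})-\nabla f_i(x^*)\|^2$ via smoothness and co-coercivity. The condition $\alpha_1\leq n/(2(2\kappa-1))$ in~\eqref{eq:alpha_condition} is exactly what makes this net coefficient non-positive so that the Bregman term can be dropped, while $\alpha_1\leq 1/2$ ensures the $(\alpha_t/n)^2\|v_t\|^2$ contribution is absorbed by the $-\alpha_t/n$ contraction on $\|x_{t-1}-x^*\|^2$. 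Monotonicity $\alpha_t\leq\alpha_{t-1}$ then lets me replace the $\alpha_t$ prefactor in front of $\sum_i\|z_i^{t-1}-z_i^*\|^2$ by $\alpha_{t-1}$, which assembles the two halves into $(1-\alpha_t/n)\E[C_{t-1}]$ and leaves $2(\alpha_t/n)^2\sigma_p^2/\mu^2$ as the only residual.

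The main obstacle is that $\sigma_p^2$ is defined through the gradient variance at $x^*$, while the algorithm queries stochastic gradients at the running iterate $x_{t-1}$. The $L$-smoothness of each $\f_i(\cdot,\rho)$ from (A2) has to be used to translate variances between these two points, and the residuals this generates must cancel exactly against the $\|x_{t-1}-x^*\|^2$ and Bregman-style contributions already accumulated on the $x$- and $z$-sides. This is the step that extends the SDCA-without-duality analysis of~\cite{shalev-shwartz_sdca_2016} to the stochastic-approximation regime and what forces the $\kappa$-dependent step-size ceiling in~\eqref{eq:alpha_condition}.
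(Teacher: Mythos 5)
Your overall architecture matches the paper's: a two-part Lyapunov function, separate recursions on $\|x_t-x^*\|^2$ and on $\sum_i\|z_i^t-z_i^*\|^2$, combination with weight $\alpha_t/n$, and step-size conditions chosen to kill the coefficients of the function-gap and $\|v_t\|^2$ terms. However, there is a genuine gap in how you dispose of the $(\alpha_t/n)^2\|v_t\|^2$ term. By invoking only convexity of the squared norm for the $z$-update, you discard the third term of the exact identity
\begin{equation*}
\|(1-\lambda)u+\lambda v\|^2=(1-\lambda)\|u\|^2+\lambda\|v\|^2-\lambda(1-\lambda)\|u-v\|^2,
\end{equation*}
namely $-\alpha_t(1-\alpha_t)\|v_t\|^2$. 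In the paper this negative term, weighted by $p_t=\alpha_t/n$, is precisely what absorbs the positive $\tfrac{1}{2}(\alpha_t/n)^2\|v_t\|^2$ arising from the $x$-expansion; the requirement $\tfrac{\alpha_t}{2n}-\tfrac{\alpha_t}{n}(1-\alpha_t)\le 0$ is exactly the condition $\alpha_t\le 1/2$. Your claim that this term is instead ``absorbed by the $-\alpha_t/n$ contraction on $\|x_{t-1}-x^*\|^2$'' does not hold: $\E[\|v_t\|^2]$ involves the stochastic gradient and $z_{i_t}^{t-1}$ and is not controlled by $\|x_{t-1}-x^*\|^2$. The natural patch, $\E\|v_t\|^2\le 2\E\|w_{i_t}^{\rho_t}-z_{i_t}^*\|^2+2A_{t-1}$ with $A_{t-1}=\tfrac{1}{n}\sum_i\|z_i^{t-1}-z_i^*\|^2$, injects $+(\alpha_t/n)^2A_{t-1}$, which exactly cancels the entire contraction $-\tfrac{\alpha_t}{n}\cdot\tfrac{\alpha_t}{n}A_{t-1}$ available on the $z$-part of the Lyapunov function, so the recursion $\E[C_t]\le(1-\alpha_t/n)\E[C_{t-1}]+\cdots$ cannot be closed that way.

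A secondary issue is your decomposition $v_t=u_t+\zeta_t$ centered at $x_{t-1}$: it leaves you with the perturbation variance $\E_\rho\|\nabla\f_{i}(x_{t-1},\rho)-\nabla f_{i}(x_{t-1})\|^2$ at the running iterate, and translating it to $\sigp^2$ (defined at $x^*$) requires an additional Lipschitz argument whose residuals you assert, but do not show, cancel. The paper avoids this entirely by centering at $x^*$ from the start: it bounds $\|\nabla\f_i(x,\rho)-\nabla f_i(x^*)\|^2$ by $2\|\nabla\f_i(x,\rho)-\nabla\f_i(x^*,\rho)\|^2+2\|\nabla\f_i(x^*,\rho)-\nabla f_i(x^*)\|^2$ and applies co-coercivity of $\f_i(\cdot,\rho)$ to the first term, yielding directly $4L(f(x)-f(x^*))+2\sigp^2$ and hence the clean constants in~\eqref{eq:c_rec}. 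You should adopt both the exact three-term identity for the $z$-update and the $x^*$-centered variance bound to complete the argument.
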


We now state the main convergence result, which provides the expected rate $O(1/t)$ on $C_t$ based on decreasing step-sizes, similar to~\cite{bottou_optimization_2016} for SGD. Note that convergence of objective function values is directly related to that of the Lyapunov function~$C_t$ via smoothness:
\begin{equation}
\label{eq:smoothness}
\E [f(x_t) - f(x^*)] \leq \frac{L}{2} \E \left[ \|x_t - x^*\|^2 \right] \leq L \E [C_t].
\end{equation}
\begin{theorem}[Convergence of Lyapunov function]
\label{thm:smiso_decaying}
Let the sequence of step-sizes $(\alpha_t)_{t \geq 1}$ be defined by $\alpha_t = \frac{2 n}{\gamma + t}$ with $\gamma \geq 0$ such that~$\alpha_1$ satisfies~\eqref{eq:alpha_condition}.
For all $t \geq 0$, it holds that
\begin{equation}
\label{eq:nu_def}
  \E [C_t] \leq \frac{\nu}{\gamma + t + 1}
   ~~~~\text{where}~~~~
\nu := \max \left\{ \frac{8 \sigp^2}{\mu^2}, (\gamma + 1)C_0 \right\}.
\end{equation}
\end{theorem}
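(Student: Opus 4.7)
The proof proceeds by induction on $t$, using the recursion from Proposition~\ref{prop:c_rec} as the only non-trivial input. Setting $\alpha_t / n = 2/(\gamma + t)$, that recursion becomes
\[
\E[C_t] \leq \left(1 - \frac{2}{\gamma+t}\right) \E[C_{t-1}] + \frac{8\,\sigp^2}{\mu^2 (\gamma+t)^2},
\]
so the goal is to show, by induction, that $\E[C_t] \leq \nu/(\gamma+t+1)$ with $\nu$ as in~\eqref{eq:nu_def}.

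For the base case $t=0$, the definition $\nu \geq (\gamma+1) C_0$ gives $C_0 \leq \nu/(\gamma+1)$, which is exactly the desired inequality.

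For the inductive step, assume $\E[C_{t-1}] \leq \nu/(\gamma+t)$ and write $s := \gamma + t$ for brevity. Plugging into the recursion yields
\[
\E[C_t] \leq \frac{\nu(s-2)}{s^2} + \frac{8\,\sigp^2/\mu^2}{s^2} \leq \frac{\nu(s-2) + \nu}{s^2} = \frac{\nu(s-1)}{s^2},
\]
where the second inequality uses the other defining property $\nu \geq 8\sigp^2/\mu^2$. The proof is then closed by the elementary algebraic bound $(s-1)(s+1) = s^2 - 1 \leq s^2$, which gives $\nu(s-1)/s^2 \leq \nu/(s+1) = \nu/(\gamma+t+1)$.

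The only real subtlety is to verify that the step-size condition~\eqref{eq:alpha_condition} is respected by the chosen schedule: since $\gamma$ is prescribed exactly so that $\alpha_1 = 2n/(\gamma+1)$ meets~\eqref{eq:alpha_condition}, and since $\alpha_t$ is decreasing in $t$, the hypotheses of Proposition~\ref{prop:c_rec} hold at every step, so the recursion may be iterated. I do not anticipate any serious obstacle: the whole argument is the standard $O(1/t)$ induction for stochastic approximation with decreasing step-sizes (compare with the SGD analysis in~\cite{bottou_optimization_2016}), carefully tuned via the choice of the $2n$ numerator in $\alpha_t$ so that the resulting recursion coefficient $(1 - 2/(\gamma+t))$ is strong enough to absorb the additive noise term into $\nu/(\gamma+t+1)$.
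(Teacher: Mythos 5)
Your proof is correct and follows essentially the same induction as the paper's: the paper proves a slightly more general version with step-sizes $\alpha_t = \beta n/(\gamma+t)$ and $\nu \geq 2\beta^2\sigp^2/(\mu^2(\beta-1))$, which for $\beta = 2$ reduces exactly to your argument, including the splitting of the noise term against $\nu$ and the final bound $\hat{t}^2 \geq (\hat{t}+1)(\hat{t}-1)$.
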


\vs
\vs
\vs
\paragraph{Choice of step-sizes in practice.}
Naturally, we would like $\nu$ to be small, in particular independent of the initial condition $C_0$ and equal to the first term in the definition~\eqref{eq:nu_def}. We would like the dependence on $C_0$ to vanish at a faster rate than $O(1/t)$, as it is the case in variance reduction algorithms on finite sums. As advised in~\cite{bottou_optimization_2016} in the context of SGD, we can initially run the algorithm with a constant step-size $\bar{\alpha}$ and exploit this linear convergence regime until we reach the level of noise given by $\sigp$, and then start decaying the step-size.
It is easy to see that by using a constant step-size~$\bar{\alpha}$, $C_t$ converges near a value $\bar{C} := {2 \bar{ \alpha} \sigp^2}/{n \mu^2}$. Indeed, Eq.~\eqref{eq:c_rec} with $\alpha_t = \bar{\alpha}$ yields
\begin{equation*}
\E [C_t - \bar{C}] \leq \left( 1 - \frac{\bar{\alpha}}{n} \right) \E [C_{t-1} - \bar{C}].
\end{equation*}
Thus, we can reach a precision~$C'_0$ with~$\E[C'_0] \leq \bar{\epsilon} := 2\bar{C}$ in $O(\frac{n}{\bar{\alpha}} \log C_0/\bar{\epsilon})$ iterations. Then, if we start decaying step-sizes as in Theorem~\ref{thm:smiso_decaying} with $\gamma$ large enough so that $\alpha_1 =  \bar{\alpha}$, we have
\begin{equation*}
(\gamma + 1) \E[C'_0] \leq (\gamma + 1) \bar{\epsilon} = 8 \sigp^2/\mu^2,
\end{equation*}
making both terms in~\eqref{eq:nu_def} smaller than or equal to $\nu = 8 \sigp^2/ \mu^2$. Considering these two phases, with an initial step-size~$\bar{\alpha}$ given by~\eqref{eq:alpha_condition}, the final work complexity for reaching $\E[\|x_t - x^*\|^2] \leq \epsilon$ is
\begin{equation}
\label{eq:complexity}
   O \left( \left(n + \frac{L}{\mu}\right)\log \frac{C_0}{\bar{\epsilon}} \right) + O \left( \frac{\sigp^2}{\mu^2 \epsilon} \right).
\end{equation}
We can then use~\eqref{eq:smoothness} in order to obtain the complexity for reaching $\E[f(x_t) - f(x^*)] \leq \epsilon$. Note that following this step-size strategy was found to be very effective in practice (see Section~\ref{sec:experiments}).

\vs
\paragraph{Acceleration by iterate averaging.}
When one is interested in the convergence in function values, the complexity~(\ref{eq:complexity}) combined with~\eqref{eq:smoothness} yields  $O(L \sigp^2 / \mu^2 \epsilon)$, which can be problematic for ill-conditioned problems (large condition number $L/\mu$). The following theorem presents an iterate averaging scheme which brings the complexity term down to $O(\sigp^2 / \mu \epsilon)$, which appeared in Table~\ref{table:complexity_comp}.
\begin{theorem}[Convergence under iterate averaging]
\label{thm:averaging}
Let the step-size sequence $(\alpha_t)_{t \geq 1}$ be defined by
\begin{align*}
\alpha_t = \frac{2 n}{\gamma + t} \quad \text{ for } \gamma \geq 1 \text{ s.t. } \alpha_1 \leq \min \left\{ \frac{1}{2}, \frac{n}{4(2 \kappa - 1)} \right\}.
\end{align*}
We have
\begin{equation*}
\E[f(\bar{x}_T) - f(x^*)] \leq \frac{2 \mu \gamma(\gamma - 1) C_0}{T(2 \gamma + T-1)} + \frac{16 \sigp^2}{\mu (2 \gamma + T - 1)},
\end{equation*}
where
\begin{equation*}
\bar{x}_T := \frac{2}{T(2 \gamma + T-1)} \sum_{t=0}^{T - 1} (\gamma + t) x_t.
\end{equation*}
\end{theorem}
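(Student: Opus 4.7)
The plan is to sharpen the analysis behind Proposition~\ref{prop:c_rec} so that the function-gap term $f(x_{t-1}) - f(x^*)$, which that proposition discards via strong convexity, is \emph{kept} on the right-hand side of the per-step recursion. With such a refined recursion in hand, the rest of the argument is mostly algebraic: I choose weights that make the $C_t$ terms telescope, sum over $t$, and apply Jensen's inequality at $\bar{x}_T$.

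For the refined recursion, the starting point is the expansion $x_t - x^* = (x_{t-1} - x^*) + (\alpha_t/n) v_t$ together with $\E[v_t \mid \mathcal{F}_{t-1}] = -\nabla f(x_{t-1})/\mu$. Expanding the square and applying strong convexity in the form $\langle \nabla f(x_{t-1}), x_{t-1} - x^*\rangle \geq f(x_{t-1}) - f(x^*) + \tfrac{\mu}{2}\|x_{t-1} - x^*\|^2$ yields
\[
\E[\|x_t - x^*\|^2 \mid \mathcal{F}_{t-1}] \leq \bigl(1 - \tfrac{\alpha_t}{n}\bigr)\|x_{t-1} - x^*\|^2 - \tfrac{2\alpha_t}{n\mu}(f(x_{t-1}) - f(x^*)) + \tfrac{\alpha_t^2}{n^2}\E[\|v_t\|^2 \mid \mathcal{F}_{t-1}].
\]
Bounding $\E[\|v_t\|^2]$ as in the proof of Proposition~\ref{prop:c_rec} contributes three terms: a multiple of $\frac{1}{n}\sum_i \|z_i^{t-1} - z_i^*\|^2$ (absorbed together with the recursion on the $z_i$'s into the $C_t$ Lyapunov function), a $\sigp^2/\mu^2$ term, and — via convex smoothness of each $f_i$ in the form $\|\nabla f_i(x_{t-1}) - \nabla f_i(x^*)\|^2 \leq 2L(f_i(x_{t-1}) - f_i(x^*) - \langle \nabla f_i(x^*), x_{t-1}-x^*\rangle)$ — a multiple of $(L/\mu^2)(f(x_{t-1}) - f(x^*))$. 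The tightened condition $\alpha_1 \leq n/(4(2\kappa-1))$, i.e.~half the Proposition~\ref{prop:c_rec} bound, is exactly what is needed to absorb this last contribution into the gap term above while still retaining a strictly negative coefficient, giving
\[
\E[C_t] \leq \bigl(1 - \tfrac{\alpha_t}{n}\bigr) \E[C_{t-1}] - \tfrac{\alpha_t}{n\mu}\E[f(x_{t-1}) - f(x^*)] + 2 \bigl(\tfrac{\alpha_t}{n}\bigr)^{2} \tfrac{\sigp^2}{\mu^2}.
\]

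With this in hand and $\alpha_t = 2n/(\gamma+t)$, I multiply the rearranged inequality by $(\gamma+t)^2/2$. Using the identity $(\gamma+t)(\gamma+t-2) = (\gamma+t-1)^2 - 1 \leq (\gamma+t-1)^2$ converts the recursion into the telescoping form
\[
\tfrac{\gamma+t}{\mu}\E[f(x_{t-1}) - f(x^*)] \leq \tfrac{(\gamma+t-1)^2}{2}\E[C_{t-1}] - \tfrac{(\gamma+t)^2}{2}\E[C_t] + \tfrac{4\sigp^2}{\mu^2}.
\]
Summing from $t = 1$ to $T$ collapses the middle terms and, after dropping the non-negative $-\tfrac{(\gamma+T)^2}{2}\E[C_T]$ contribution, bounds $\sum_{t=1}^{T}(\gamma+t)\E[f(x_{t-1}) - f(x^*)]$ by an expression of order $\mu \gamma^2 C_0 + T\sigp^2/\mu$. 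A shift of index and the bound $(\gamma+t) \geq (\gamma+t-1)$ turns this into a bound on $\sum_{t=0}^{T-1}(\gamma+t)\E[f(x_t) - f(x^*)]$; the $\gamma(\gamma-1)$ coefficient in the statement is obtained by using $(\gamma-1)^2 \leq \gamma(\gamma-1)$ at the boundary of the telescope (equivalently, starting the weighting from $w_0^2 = \gamma(\gamma-1)$ rather than $\gamma^2$). Dividing by the total weight $W = T(2\gamma + T -1)/2$ and applying Jensen's inequality $f(\bar{x}_T) \leq W^{-1}\sum_{t=0}^{T-1}(\gamma+t)f(x_t)$ produces the claim.

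The main obstacle is the first step: carrying the function-gap term through the bound on $\E[\|v_t\|^2]$ with a precise enough coefficient that exactly $\alpha_1 \leq n/(4(2\kappa-1))$ suffices to keep the net coefficient in front of $f(x_{t-1}) - f(x^*)$ negative (and equal to $-\alpha_t/(n\mu)$). Once this is pinned down, the remaining telescoping plus Jensen argument is essentially mechanical, following a pattern familiar from averaging analyses of SGD and primal SDCA, with the $O(1/T)$ bias term driven by $C_0$ and the $O(1/T)$ variance term driven solely by $\sigp^2$ rather than $\sigtot^2$.
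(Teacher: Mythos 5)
Your overall route is the same as the paper's: re-derive the Proposition~\ref{prop:c_rec} recursion while \emph{keeping} the function-gap term, use the halved step-size condition to guarantee its coefficient stays negative, then telescope with polynomial weights and finish with Jensen. That is exactly the intended argument, and the telescoping/Jensen part is fine (your weights $(\gamma+t)^2/2$ with $(\gamma+t)(\gamma+t-2)\le(\gamma+t-1)^2$ work just as well as the paper's exact telescope with weights $(\gamma+t)(\gamma+t-1)$).

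There is, however, a quantitative error in your central displayed recursion. The combined recursion has gap coefficient $\frac{2\alpha_t}{n\mu}\left(\frac{\alpha_t}{n}(2\kappa-1)-\frac{1}{2}\right)$: the $-\frac{\alpha_t}{n\mu}$ part comes from the $B_t$ recursion, but the $A_t$ recursion (weighted by $p_t=\alpha_t/n$) adds back $+\frac{2\alpha_t^2(2\kappa-1)}{n^2\mu}$. The condition $\alpha_1\le n/(4(2\kappa-1))$ only makes this added term at most $+\frac{\alpha_t}{2n\mu}$, so the net retained coefficient is $-\frac{\alpha_t}{2n\mu}$, not the $-\frac{\alpha_t}{n\mu}$ you claim; the latter would require the $A_t$ contribution to vanish entirely, which never happens for $\kappa>1/2$. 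Propagating the correct coefficient doubles your constants and lands on the paper's $16\sigp^2/\mu(2\gamma+T-1)$ rather than the $8$ your sketch would yield. A second, smaller slip: with your weights the boundary term of the telescope at $t=1$ is $\frac{(\gamma+1-1)^2}{2}C_0=\frac{\gamma^2}{2}C_0$, and since $\gamma^2\ge\gamma(\gamma-1)$ you cannot simply replace it by $\gamma(\gamma-1)$ as you suggest (for $1\le\gamma<2$ your resulting first term actually exceeds the theorem's). The paper avoids this by multiplying by $(\gamma+t-1)$ so that the boundary coefficient is exactly $\gamma(\gamma-1)$. Both issues are bookkeeping rather than conceptual, and neither changes the $O(1/T)$ rates, but as written your derivation does not produce the stated inequality.
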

The proof uses a similar telescoping sum technique to~\cite{lacoste2012simpler}. Note that if $T \gg \gamma$, the first term, which depends on the initial condition~$C_0$, decays as $1/T^2$ and is thus dominated by the second term. Moreover, if we start averaging after an initial phase with constant step-size~$\bar{\alpha}$, we can consider $C_0 \approx {4 \bar{ \alpha} \sigp^2}/{n \mu^2}$. In the ill-conditioned regime, taking $\bar{\alpha} = \alpha_1 = 2n/(\gamma+1)$ as large as allowed by~\eqref{eq:alpha_condition}, we have~$\gamma$ of the order of $\kappa = L/\mu \gg 1$. The full convergence rate then becomes
\[
\E[f(\bar{x}_T) - f(x^*)] \leq O\left( \frac{\sigp^2}{\mu(\gamma + T)} \left(1 + \frac{\gamma}{T} \right) \right).
\]
When $T$ is large enough compared to $\gamma$, this becomes $O(\sigp^2/\mu T)$, leading to a complexity $O(\sigp^2/\mu \epsilon)$.

\section{Experiments} 
\label{sec:experiments}

\begin{figure*}[bt!]
  \begin{center}
    \includegraphics[width=.32\textwidth]{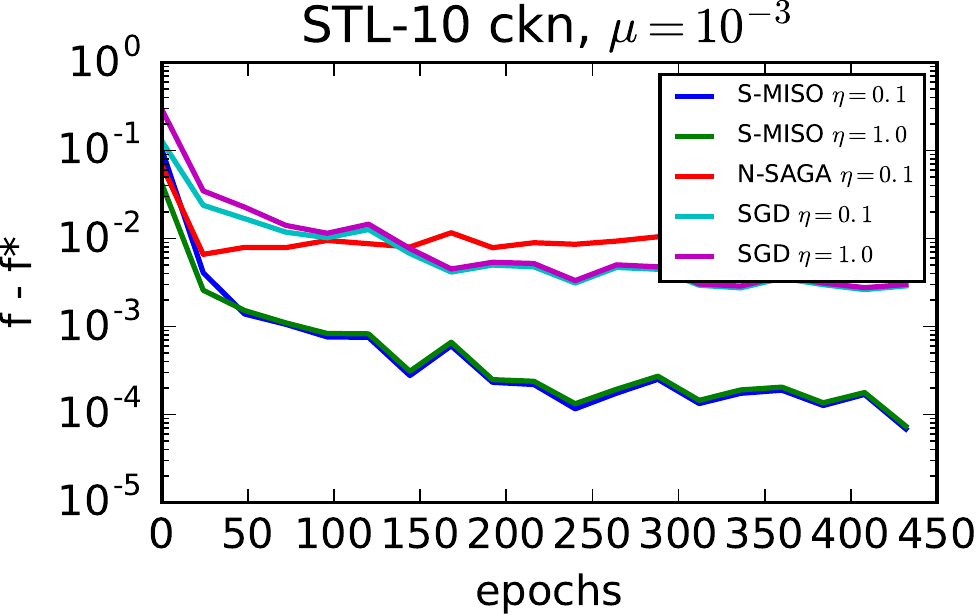}
    \includegraphics[width=.32\textwidth]{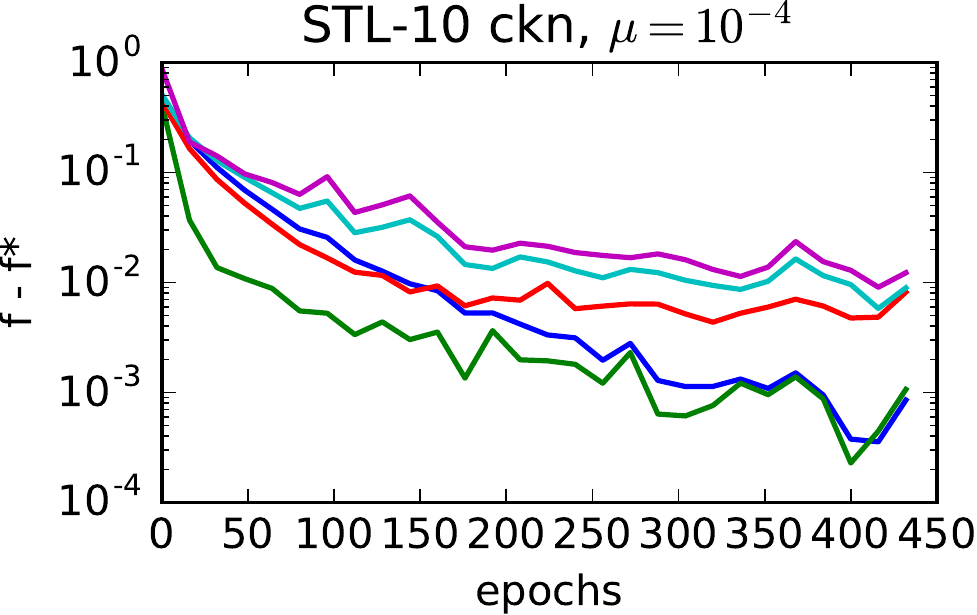}
    \includegraphics[width=.32\textwidth]{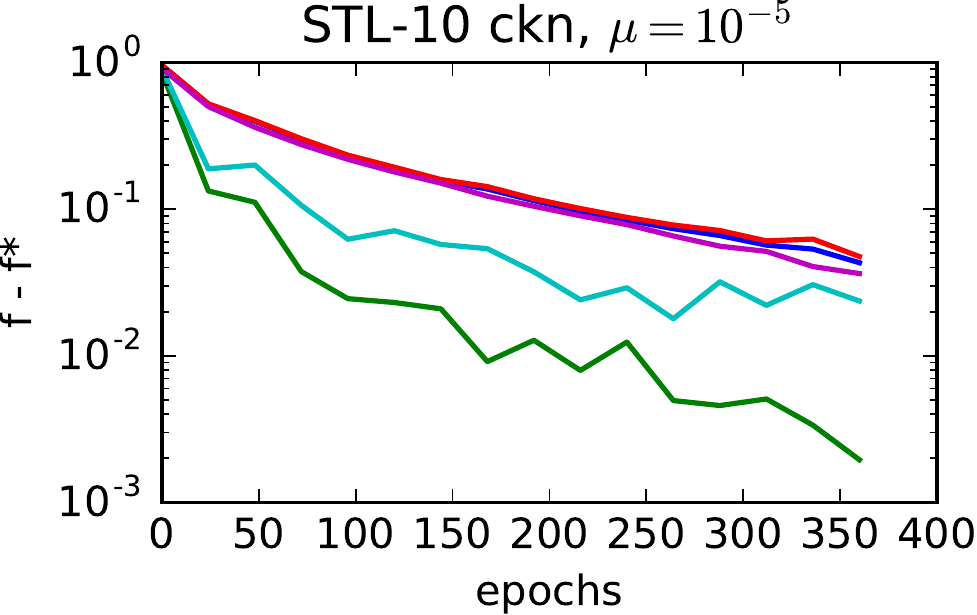}\\
    \includegraphics[width=.32\textwidth]{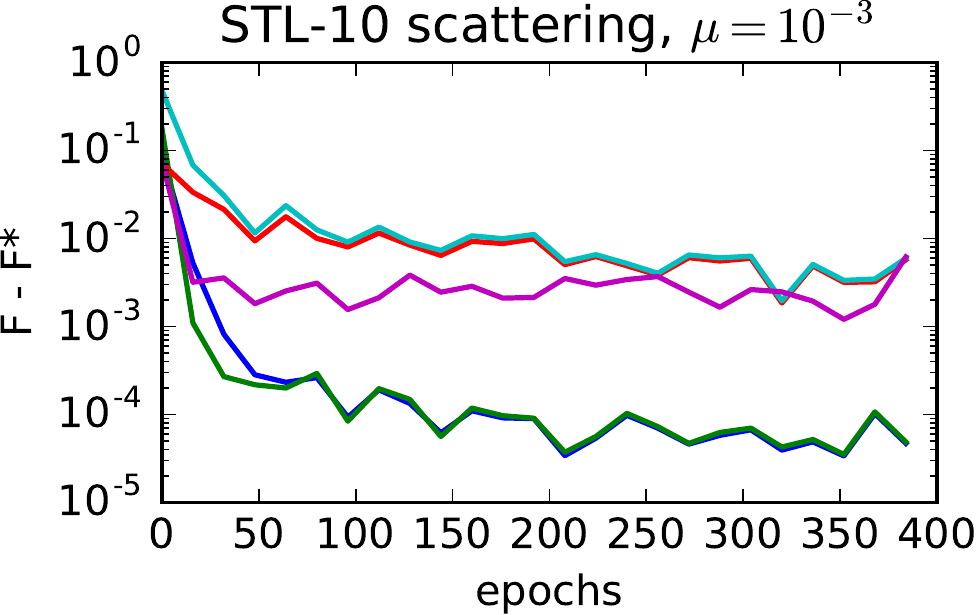}
    \includegraphics[width=.32\textwidth]{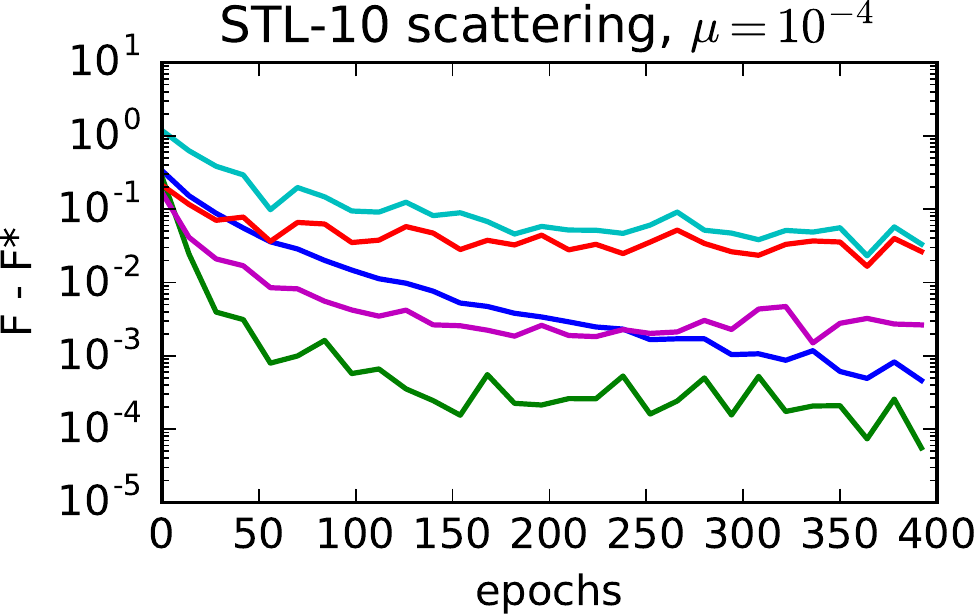}
    \includegraphics[width=.32\textwidth]{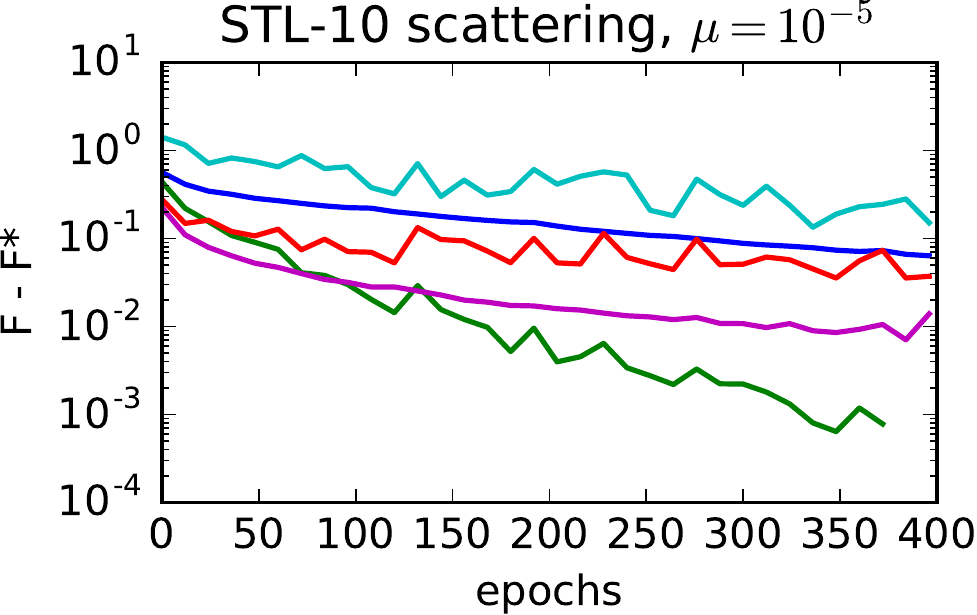}\\
  \end{center}
  \vspace{-0.3cm}
   \caption{Impact of conditioning for data augmentation on STL-10 (controlled by~$\mu$, where $\mu \!=\! 10^{-4}$ gives the best accuracy). Values of the loss are shown on a \textbf{logarithmic scale} (1 unit = factor 10). ~$\eta = 0.1$ satisfies the theory for all methods, and we include curves for larger step-sizes $\eta = 1$. We omit N-SAGA for $\eta = 1$ because it remains far from the optimum. For the scattering representation, the problem we study is $\ell_1$-regularized, and we use the composite algorithm of Appendix~\ref{sec:extensions}.}
  \label{fig:stl10}
\end{figure*}

\begin{figure*}[bt!]
  \begin{center}
    \includegraphics[width=.32\textwidth]{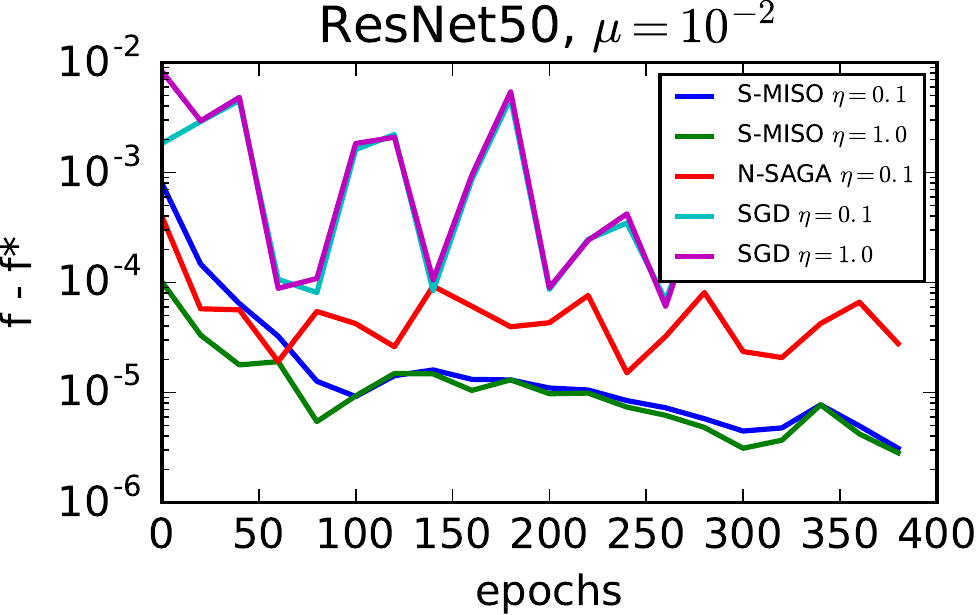}
    \includegraphics[width=.32\textwidth]{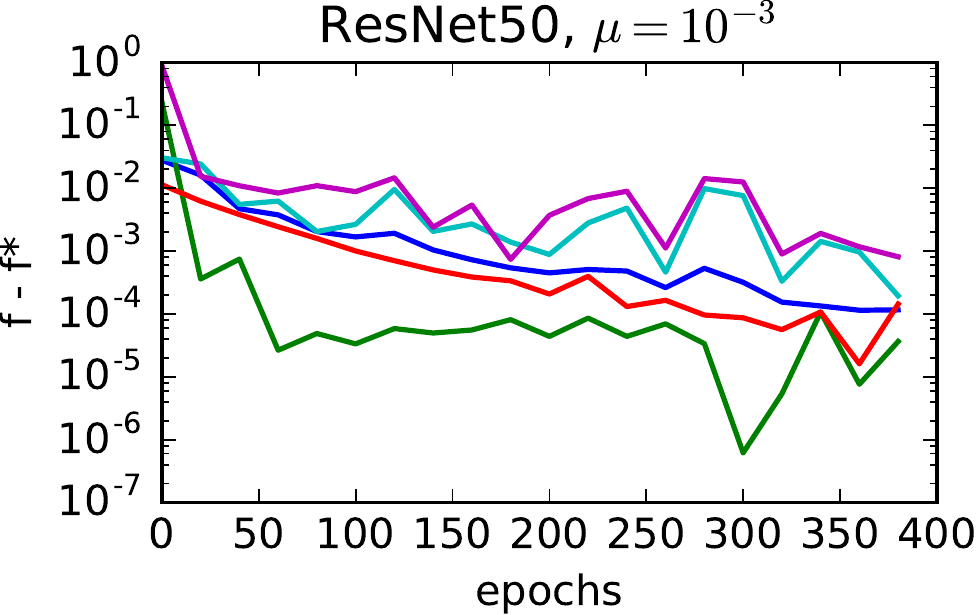}
    \includegraphics[width=.32\textwidth]{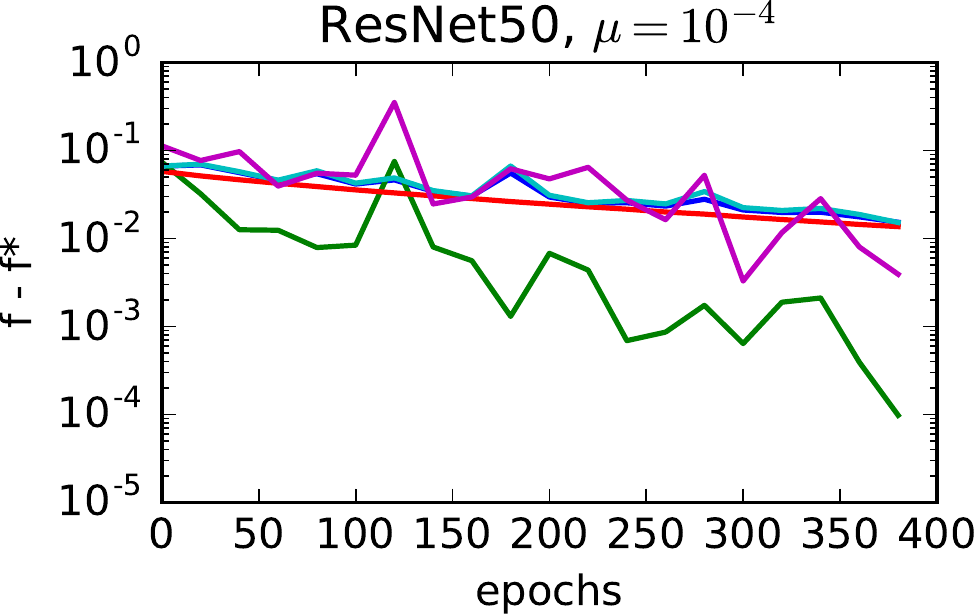}\\
  \end{center}
  \vspace{-0.3cm}
   \caption{Re-training of the last layer of a pre-trained ResNet 50 model, on a small dataset with random color perturbations (for different values of~$\mu$).}
  \label{fig:resnet}
\end{figure*}

\begin{figure*}[bt!]
  \begin{center}
    \includegraphics[width=.32\textwidth]{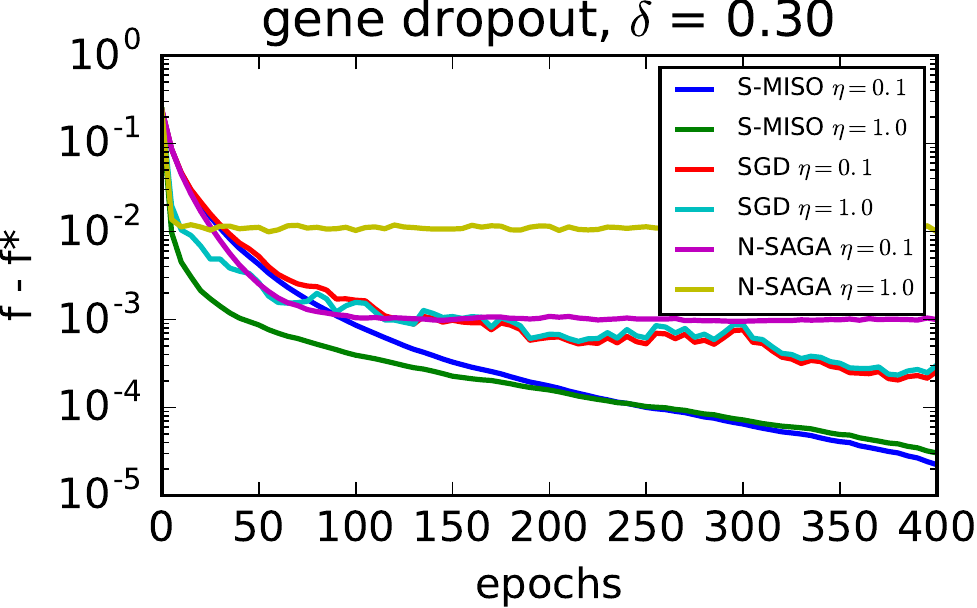}
    \includegraphics[width=.32\textwidth]{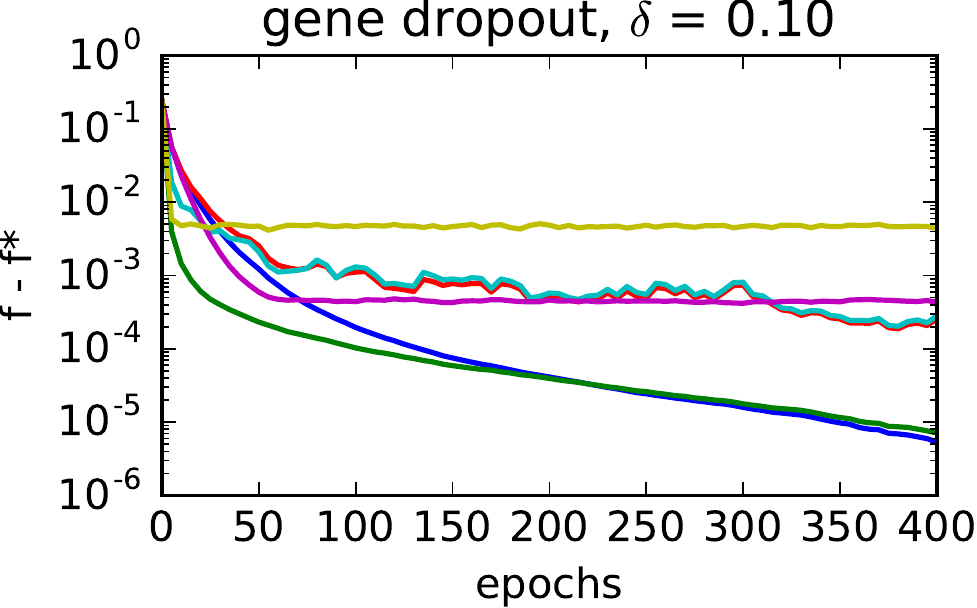}
    \includegraphics[width=.32\textwidth]{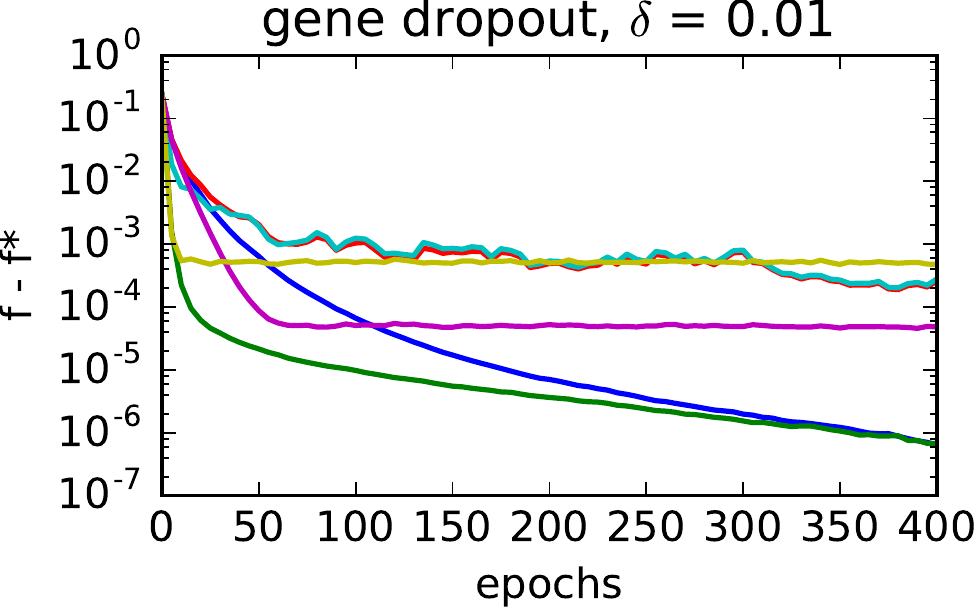}\\
    \includegraphics[width=.32\textwidth]{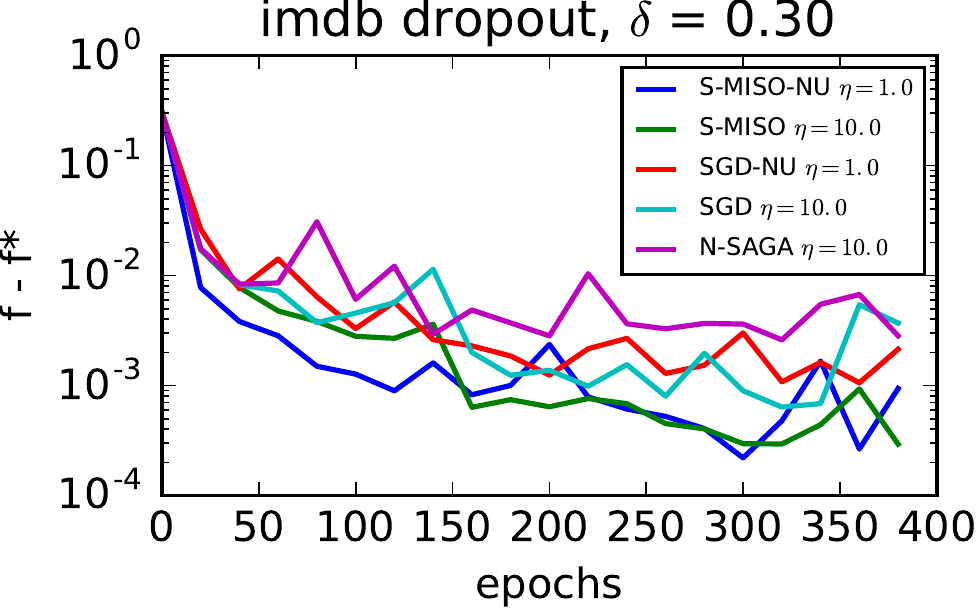}
    \includegraphics[width=.32\textwidth]{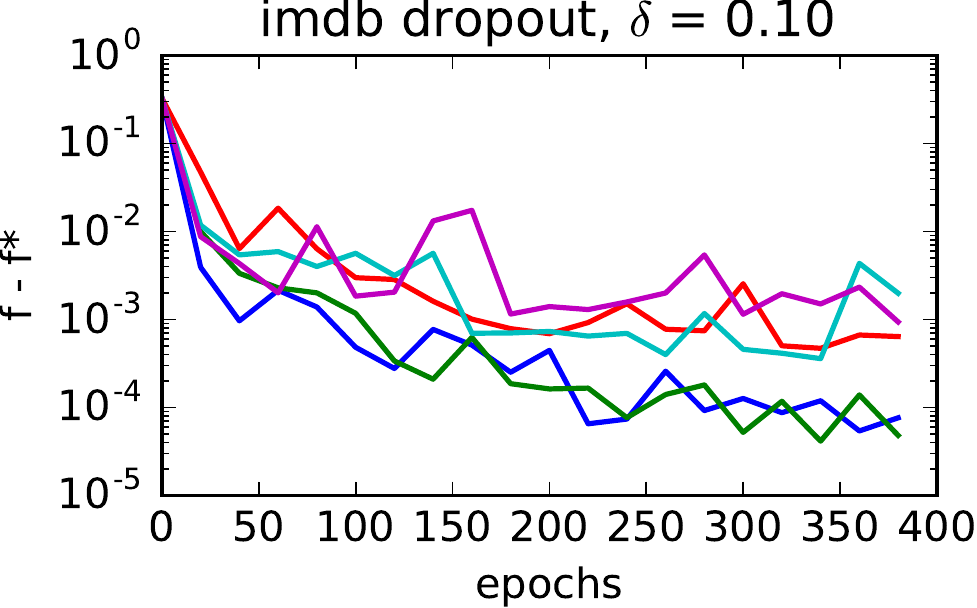}
    \includegraphics[width=.32\textwidth]{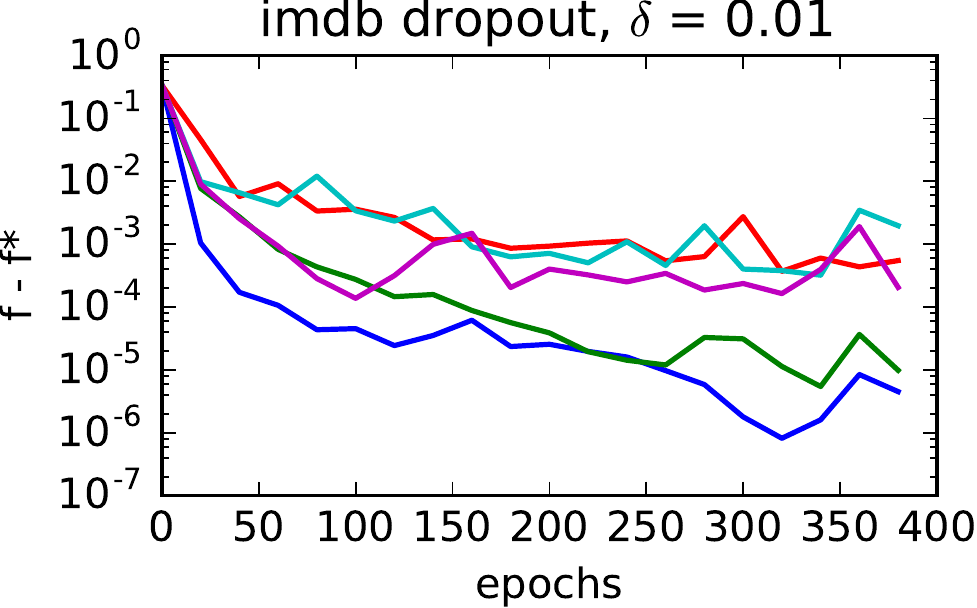}\\
  \end{center}
  \vspace{-0.3cm}
   \caption{Impact of perturbations controlled by the Dropout rate~$\delta$. The gene data is $\ell_2$-normalized; hence, we consider similar step-sizes as Figure~\ref{fig:stl10}. The IMDB dataset is highly heterogeneous; thus, we also include non-uniform (NU) sampling variants of Appendix~\ref{sec:extensions}. For uniform sampling, theoretical step-sizes perform poorly for all methods; thus, we show a larger tuned step-size $\eta = 10$.}
  \label{fig:dropout}
   \vs\vs
\end{figure*}

We present experiments comparing S-MISO with SGD and N-SAGA~\cite{hofmann_variance_2015} on four different scenarios, in order to demonstrate the wide applicability of our method: we consider an image classification dataset with two different image representations and random transformations, and two classification tasks with Dropout regularization, one on genetic data, and one on (sparse) text data.
Figures~\ref{fig:stl10} and~\ref{fig:dropout} show the curves for an estimate of the training objective using 5 sampled perturbations per example. The plots are shown on a logarithmic scale, and the values are compared to the best value obtained among the different methods in 500 epochs.
The strong convexity constant $\mu$ is the regularization parameter. For all methods, we consider step-sizes supported by the theory as well as larger step-sizes that may work better in practice.
Our C++/Cython implementation of all methods considered in this section is available at~\url{https://github.com/albietz/stochs}.

\vs
\paragraph{Choices of step-sizes.}
For both S-MISO and SGD, we use the step-size strategy mentioned in Section~\ref{sec:convergence_analysis} and advised by~\cite{bottou_optimization_2016}, which we have found to be most effective among many heuristics we have tried: we initially keep the step-size constant (controlled by a factor~$\eta \leq 1$ in the figures) for 2 epochs, and then start decaying as $\alpha_t = C/(\gamma + t)$, where $C = 2n$ for S-MISO, $C = 2/\mu$ for SGD, and $\gamma$ is chosen large enough to match the previous constant step-size. For N-SAGA, we maintain a constant step-size throughout the optimization, as suggested in the original paper~\cite{hofmann_variance_2015}. The factor~$\eta$ shown in the figures is such that~$\eta = 1$ corresponds to an initial step-size $n \mu/(L - \mu)$ for S-MISO (from~\eqref{eq:alpha_condition_prox_nonu} in the uniform case) and $1/L$ for SGD and N-SAGA (with~$\bar{L}$ instead of~$L$ in the non-uniform case when using the variant of Appendix~\ref{sec:extensions}).

\vs
\paragraph{Image classification with ``data augmentation''.}
The success of deep neural networks is often limited by the availability of
large amounts of labeled images. When there are many unlabeled images but few
labeled ones, a common approach is to train a linear classifier on top of a
deep network learned in an unsupervised manner, or pre-trained on a different task (\eg, on the ImageNet dataset). We follow this approach on the
STL-10 dataset~\cite{coates_analysis_2011}, which contains 5K training images from $10$ classes
and 100K unlabeled images, using a 2-layer unsupervised convolutional kernel
network~\cite{mairal_end--end_2016}, giving representations of dimension
$9\,216$. The perturbation consists of randomly cropping and scaling the input images.
We use the squared hinge loss in a one-versus-all setting.
The vector representations are $\ell_2$-normalized such that we may use the upper bound $L = 1 + \mu$ for the smoothness constant.
We also present results on the same dataset using a scattering representation~\cite{bruna2013invariant} of dimension~$21\,696$, with random gamma corrections (raising all pixels to the power~$\gamma$, where~$\gamma$ is chosen randomly around~$1$). For this representation, we add an~$\ell_1$ regularization term and use the composite variant of S-MISO presented in Appendix~\ref{sec:extensions}.

Figure~\ref{fig:stl10} shows convergence results on one training fold (500 images), for different values of~$\mu$, allowing us to study the behavior of the algorithms
for different condition numbers.
The low variance induced by data transformations allows S-MISO to reach suboptimality that is orders of magnitude smaller than SGD after the same number of epochs.
Note that one unit on these plots corresponds to one order of magnitude in the logarithmic scale. N-SAGA initially reaches a smaller suboptimality than SGD, but quickly gets stuck due to the bias in the algorithm, as predicted by the theory~\cite{hofmann_variance_2015}, while S-MISO and SGD continue to converge to the optimum thanks to the decreasing step-sizes.
The best validation accuracy for both representations is obtained for $\mu \approx 10^{-4}$ (middle column), and we observed relative gains of up to 1\% from using data augmentation. We computed empirical variances of the image representations for these two strategies, which are closely related to the variance in gradient estimates, and observed these transformations to account for about 10\% of the total variance.

Figure~\ref{fig:resnet} shows convergence results when training the last layer of a 50-layer Residual network~\cite{he2016deep} that has been pre-trained on ImageNet.
Here, we consider the common scenario of leveraging a deep model trained on a large dataset as a feature extractor in order to learn a new classifier on a different small dataset, where it would be difficult to train such a model from scratch.
To simulate this setting, we consider a binary classification task on a small dataset of 100 images of size 256x256 taken from the ImageNet Large Scale Visual Recognition Challenge (ILSVRC) 2012, which we crop to 224x224 before performing random adjustments to brightness, saturation, hue and contrast.
As in the STL-10 experiments, the gains of S-MISO over other methods are of about one order of magnitude in suboptimality, as predicted by Table~\ref{table:variance_comp}.

\vs
\paragraph{Dropout on gene expression data.}
We trained a binary logistic regression model on the breast cancer dataset of~\cite{van_de_vijver_gene-expression_2002}, with different Dropout rates~$\delta$, \ie, where at every iteration, each coordinate~$\xi_j$ of a feature vector~$\xi$ is set to zero independently with probability $\delta$ and to $\xi_j/(1- \delta)$ otherwise. The dataset consists of 295 vectors of dimension 8\,141 of gene expression data, which we normalize in $\ell_2$ norm. Figure~\ref{fig:dropout} (top) compares S-MISO with SGD and N-SAGA for three values of~$\delta$, as a way to control the variance of the perturbations. We include a Dropout rate of $0.01$ to illustrate the impact of $\delta$ on the algorithms and study the influence of the perturbation variance~$\sigp^2$, even though this value of~$\delta$ is less relevant for the task. The plots show very clearly how the variance induced by the perturbations affects the convergence of S-MISO, giving suboptimality values that may be orders of magnitude smaller than SGD.
This behavior is consistent with the theoretical convergence rate established in Section~\ref{sec:convergence_analysis} and shows that the practice matches the theory.

\vs
\paragraph{Dropout on movie review sentiment analysis data.}
We trained a binary classifier with a squared hinge loss on the IMDB dataset~\cite{maas2011learning} with different Dropout rates~$\delta$. We use the labeled part of the IMDB dataset, which consists of 25K training and 250K testing movie reviews, represented as 89\,527-dimensional sparse bag-of-words vectors. In contrast to the previous experiments, we do not normalize the representations, which have great variability in their norms, in particular, the maximum Lipschitz constant across training points is roughly 100 times larger than the average one. Figure~\ref{fig:dropout} (bottom) compares non-uniform sampling versions of S-MISO (see Appendix~\ref{sec:extensions}) and SGD (see Appendix~\ref{sec:sgd_recursions}) with their uniform sampling counterparts as well as N-SAGA. Note that we use a large step-size~$\eta = 10$ for the uniform sampling algorithms, since~$\eta = 1$ was significantly slower for all methods, likely due to outliers in the dataset. In contrast, the non-uniform sampling algorithms required no tuning and just use~$\eta = 1$. The curves clearly show that S-MISO-NU has a much faster convergence in the initial phase, thanks to the larger step-size allowed by non-uniform sampling, and later converges similarly to S-MISO, \ie, at a much faster rate than SGD when the perturbations are small. The value of~$\mu$ used in the experiments was chosen by cross-validation, and the use of Dropout gave improvements in test accuracy from 88.51\% with no dropout to $88.68 \pm 0.03\%$ with~$\delta = 0.1$ and $88.86 \pm 0.11\%$ with~$\delta = 0.3$ (based on 10 different runs of S-MISO-NU after 400 epochs).

Finally, we also study the effect of the iterate averaging scheme of Theorem~\ref{thm:averaging} in Appendix~\ref{sec:avg_experiments}.


\subsubsection*{Acknowledgements}
This work was supported by a grant from ANR (MACARON project under grant number ANR-14-CE23-0003-01), by the ERC grant number 714381 (SOLARIS project), and by the MSR-Inria joint~center.

{\small
\bibliography{bibli}
\bibliographystyle{abbrv}
}

\clearpage
\appendix

\section{Extension to Composite Objectives and Non-Uniform Sampling} 
\label{sec:extensions}

In this section, we study extensions of S-MISO to different situations where our previous smoothness assumption~(A2) is not suitable, either because of a non-smooth term~$h$ in the objective or because it ignores additional useful knowledge about each~$f_i$ such as the norm of each example.

In the presence of non-smooth regularizers such as the~$\ell_1$-norm, the objective is no longer smooth, but we can leverage its composite structure by using proximal operators. To this end, we assume that one can easily compute the proximal operator of~$h$, defined by
\[\prox_h(z) := \arg\min_{x\in \R^p} \left\{ \frac{1}{2} \|x - z\|^2 + h(x) \right\}.\]

When the smoothness constants~$L_i$ vary significantly across different examples (typically through the norm of the feature vectors), the uniform upper bound~$L = L_{\max} = \max_i L_i$ can be restrictive. It has been noticed (see, \eg,~\cite{schmidt_minimizing_2013,xiao2014proximal}) that when the~$L_i$ are known, one can achieve better convergence rates---typically depending on the average smoothness constant $\bar{L} = \frac{1}{n} \sum_i L_i$ rather than $L_{\max}$---by sampling examples in a non-uniform way. For that purpose, we now make the following assumptions:
\begin{itemize}[noitemsep,topsep=0pt,leftmargin=5mm]
  \item (A3) {\bfseries strong convexity}: $\f_i(\cdot, \rho)$ is $\mu$-strongly convex for all $i, \rho$;
  \item (A4) {\bfseries smoothness}: $\f_i(\cdot, \rho)$ is $L_i$-smooth for all $i, \rho$;
\end{itemize}
Note that our proof relies on a slightly stronger assumption~(A3) than the global strong convexity assumption~(A1) made above, which holds in the situation where strong convexity comes from an~$\ell_2$ regularization term. In order to exploit the different smoothness constants, we allow the algorithm to sample indices~$i$ non-uniformly, from any distribution~$q$ such that $q_i \geq 0$ for all~$i$ and $\sum_i q_i = 1$.

\begin{algorithm}[tb]
\caption{S-MISO for composite objectives, with non-uniform sampling.}
\label{alg:smiso_prox_nu}
\begin{algorithmic}
\STATE {\bfseries Input:} step-sizes $(\alpha_t)_{t\geq 1}$, sampling distribution~$q$;
\STATE Initialize $x_0 = \prox_{h/\mu}(\bar{z}_0)$ with $\bar{z}_0 = \frac{1}{n}\sum_i z_i^0$ for some $(z_i^0)_{i=1,\ldots,n}$ that safisfies~\eqref{eq:miso_init};\\
\FOR{$t = 1, \ldots$}
   \STATE Sample an index~$i_t\sim q$, a perturbation~$\rho_t$, and update (with $\alpha_t^i = \frac{\alpha_t}{q_i n}$):
  \begin{align}z_i^t &= 
  \begin{cases}
    (1-\alpha_t^i)z_i^{t-1} + \alpha_t^i (x_{t-1} - \frac{1}{\mu} \nabla \f_{i_t}(x_{t-1}, \rho_t)),&\text{if }i = i_t\\
    z_i^{t-1},&\text{otherwise}
  \end{cases} \label{eq:z_update_nu}\\
  \bar{z}_t &= \frac{1}{n}\sum_{i=1}^n z_i^t = \bar{z}_{t-1} + \frac{1}{n}(z_{i_t}^t - z_{i_t}^{t-1}) \nonumber\\
  x_t &= \prox_{h/\mu} (\bar{z}_t). \label{eq:x_update_prox}
  \end{align}
\ENDFOR
\end{algorithmic}
\end{algorithm}

The extension of S-MISO to this setting is given in Algorithm~\ref{alg:smiso_prox_nu}. Note that the step-sizes vary depending on the example, with larger steps for examples that are sampled less frequently (typically ``easier'' examples with smaller~$L_i$). Note that when~$h = 0$, the update directions are unbiased estimates of the gradient: we have $\E[x_t - x_{t-1} | \mathcal{F}_{t-1}] = -\frac{\alpha_t}{n \mu} \nabla f(x_{t-1})$ as in the uniform case. However, in the composite case, the algorithm cannot be written in a proximal stochastic gradient form like Prox-SVRG~\cite{xiao2014proximal} or SAGA~\cite{defazio_saga:_2014}.

\vs
\paragraph{Relationship with RDA.}
When $n = 1$, our algorithm performs similar updates to Regularized Dual Averaging (RDA)~\cite{xiao2010dual} with strongly convex regularizers. In particular, if $\f_1(x, \rho) = \phi(x^\top \xi(\rho)) + (\mu/2) \|x\|^2$, the updates are the same when taking~$\alpha_t = 1/t$, since
\begin{align*}
\prox_{h/\mu}(\bar{z}_t) &= \arg\min_x \left\{ \langle -\mu \bar{z}_t, x \rangle + \frac{\mu}{2} \|x\|^2 + h(x) \right\},
\end{align*}
and $- \mu\bar{z}_t$ is equal to the average of the gradients of the loss term up to~$t$, which appears in the same way in the RDA updates~\cite[Section 2.2]{xiao2010dual}. However, unlike RDA, our method supports arbitrary decreasing step-sizes, in particular keeping the step-size constant, which can lead to faster convergence in the initial iterations (see Section~\ref{sec:convergence_analysis}).

\vs
\paragraph{Lower-bound model and convergence analysis.}
Again, we can view the algorithm as iteratively updating approximate lower bounds on the objective~$F$ of the form $D_t(x) = \frac{1}{n} \sum_i d_i^t(x) + h(x)$ analogously to~\eqref{eq:lb_update}, and minimizing the new~$D_t$ in~\eqref{eq:x_update_prox}. Similar to MISO-Prox, we require that $d_i^0$ is initialized with a $\mu$-strongly convex quadratic such that $\f_i(x, \tilde{\rho}_i) \geq d_i^0(x)$ with the random perturbation $\tilde{\rho}_i$. Given the form of $d_i^t$ in~\eqref{eq:d_i}, it suffices to choose $z_i^0$ that satisfies
\begin{equation}
\label{eq:miso_init}
\f_i(x, \tilde{\rho}_i) \geq \frac{\mu}{2}\|x - z_i^0\| + c,
\end{equation}
for some constant $c$. In the common case of an $\ell_2$ regularizer with a non-negative loss, one can simply choose $z_i^0 = 0$ for all~$i$, otherwise, $z_i^0$ can be obtained by considering a strong convexity lower bound on $\f_i(\cdot, \tilde{\rho}_i)$.
Our new analysis relies on the minimum $D_t(x_t)$ of the lower bounds~$D_t$ through the following Lyapunov function:
\begin{equation}
\label{eq:c_t_prox_nonu}
  C_t^q = F(x^*) - D_t(x_t) + \frac{\mu\alpha_t}{n^2} \sum_{i=1}^{n} \frac{1}{q_i n} \|z_i^t - z_i^* \|^2.
\end{equation}
The convergence of the iterates~$x_t$ is controlled by the convergence in~$C_t^q$ thanks to the next lemma:
\begin{lemma}[Bound on the iterates]
\label{lemma:iterate_bound}
For all~$t$, we have
\begin{equation}
\frac{\mu}{2} \E[ \|x_t - x^* \|^2 ] \leq \E[ F(x^*) - D_t(x_t) ].
\end{equation}
\end{lemma}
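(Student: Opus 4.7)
The plan is to combine two observations: first, that $x_t$ is the unique minimizer of the $\mu$-strongly convex function $D_t$, and second, that $D_t$ is, in expectation, a lower bound on $F$ at the point $x^*$.

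For the first observation, note that by expanding $d_i^t(x) = c_{i,2}^t - \mu\langle x, z_i^t\rangle + \frac{\mu}{2}\|x\|^2$ from~\eqref{eq:d_i} and averaging, we get $D_t(x) = \text{const} + \frac{\mu}{2}\|x - \bar{z}_t\|^2 + h(x) - \frac{\mu}{2}\|\bar{z}_t\|^2$, so $\arg\min_x D_t(x) = \prox_{h/\mu}(\bar{z}_t) = x_t$ by the update rule~\eqref{eq:x_update_prox}. Since $D_t$ is $\mu$-strongly convex (each $d_i^t$ is a $\mu$-strongly convex quadratic and $h$ is convex), strong convexity at its minimizer gives, for every $x$,
\begin{equation*}
D_t(x_t) + \frac{\mu}{2}\|x - x_t\|^2 \leq D_t(x).
\end{equation*}
Applying this at $x = x^*$ and taking expectations reduces the claim to showing $\E[D_t(x^*)] \leq F(x^*)$, i.e., that in expectation $D_t$ lower-bounds $F$ at the optimum.

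I would prove the lower-bound-in-expectation property by induction on $t$, arguing componentwise that $\E[d_i^t(x^*)] \leq f_i(x^*)$ for every $i$. For the base case, the initialization condition~\eqref{eq:miso_init} combined with taking expectation over the auxiliary perturbation $\tilde{\rho}_i$ yields $f_i(x) \geq \frac{\mu}{2}\|x - z_i^0\|^2 + c = d_i^0(x)$ deterministically, so in particular at $x^*$. For the inductive step, condition on $\mathcal{F}_{t-1}$: with probability $q_i$ we have $d_i^t = (1-\alpha_t^i) d_i^{t-1} + \alpha_t^i l_i^t$, and with probability $1-q_i$ we have $d_i^t = d_i^{t-1}$, where $l_i^t(x) = \f_i(x_{t-1},\rho_t) + \langle\nabla \f_i(x_{t-1},\rho_t), x - x_{t-1}\rangle + \frac{\mu}{2}\|x - x_{t-1}\|^2$. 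By the per-sample strong convexity assumption (A3), $l_i^t(x^*) \leq \f_i(x^*, \rho_t)$ pointwise in $\rho_t$, so taking expectation over $\rho_t$ yields $\E[l_i^t(x^*) \mid \mathcal{F}_{t-1}] \leq f_i(x^*)$. Combining,
\begin{equation*}
\E[d_i^t(x^*) \mid \mathcal{F}_{t-1}] \leq (1 - q_i\alpha_t^i)\, d_i^{t-1}(x^*) + q_i\alpha_t^i\, f_i(x^*),
\end{equation*}
and taking total expectation and applying the inductive hypothesis $\E[d_i^{t-1}(x^*)] \leq f_i(x^*)$ gives $\E[d_i^t(x^*)] \leq f_i(x^*)$. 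Averaging over $i$ and adding the deterministic term $h(x^*)$ yields $\E[D_t(x^*)] \leq F(x^*)$, completing the proof.

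The main subtle point is that the $l_i^t$ are not deterministic lower bounds on $f_i$ (because they are built from stochastic gradients), so $d_i^t$ is only a lower bound on $f_i$ \emph{in expectation}. This is precisely why the statement of the lemma is in expectation, and why assumption (A3)---strong convexity of $\f_i(\cdot,\rho)$ for every $\rho$, strictly stronger than (A1)---is needed: it is what lets $l_i^t(x^*)$ be a pointwise (in $\rho_t$) lower bound on $\f_i(x^*, \rho_t)$ before we take the expectation. The non-uniform sampling factor $1/(q_i n)$ hidden in $\alpha_t^i$ does not cause trouble because it is exactly cancelled by the probability $q_i$ of updating component $i$, so the induction closes cleanly.
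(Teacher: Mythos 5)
Your proof is correct and follows essentially the same route as the paper's: strong convexity of $D_t$ at its minimizer $x_t$, combined with an induction showing that the model lower-bounds $F$ at $x^*$ in expectation, with assumption (A3) supplying the pointwise-in-$\rho_t$ inequality $l_i^t(x^*) \leq \f_i(x^*,\rho_t)$. The only cosmetic difference is that the paper routes the induction through an auxiliary function $F_t(x) = \frac{1}{n}\sum_i f_i^t(x) + h(x)$ built from the sampled $\f_i(\cdot,\rho_t)$, proving $D_t(x^*) \leq F_t(x^*)$ deterministically and $\E[F_t(x^*)] = F(x^*)$ separately, whereas you fold both steps into a single componentwise induction on $\E[d_i^t(x^*)]$.
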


The following proposition gives a recursion on~$C_t^q$ similar to Proposition~\ref{prop:c_rec}.
\begin{proposition}[Recursion on $C_t^q$]
\label{prop:c_rec_prox_nonu}
If $(\alpha_t)_{t\geq 1}$ is a positive and non-increasing sequence of step-sizes satisfying
\begin{equation}
\label{eq:alpha_condition_prox_nonu}
\alpha_1 \leq \min \left\{ \frac{n q_{\min}}{2}, \frac{n \mu}{4 L_q} \right\},
\end{equation}
with $q_{\min} = \min_i q_i$ and $L_q = \max_i \frac{L_i - \mu}{q_i n}$, then $C_t^q$ obeys the recursion
\begin{equation}
\label{eq:c_rec_prox_nonu}
  \E [C_t^q] \leq \left( 1 - \frac{\alpha_t}{n} \right) \E [C_{t-1}^q] + 2 \left( \frac{\alpha_t}{n} \right)^2 \frac{\sigpq^2}{\mu},
\end{equation}
with $\sigpq^2 = \frac{1}{n} \sum_i \frac{\sigpi^2}{q_i n}$.
\end{proposition}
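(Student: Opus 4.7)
The plan is to mirror the proof of Proposition~\ref{prop:c_rec}, adapted to the composite non-uniform setting. I decompose $C_t^q - C_{t-1}^q$ into (A) the change in the model gap $F(x^*) - D_t(x_t)$ and (B) the change in the weighted $z$-term $\frac{\mu \alpha_t}{n^2}\sum_i \frac{1}{q_i n}\|z_i^t - z_i^*\|^2$, bound each in conditional expectation given $\mathcal{F}_{t-1}$, and then combine.

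For part (A), only index $i_t$ is refreshed, so $D_t(x) - D_{t-1}(x) = \tfrac{\alpha_t^{i_t}}{n}(l_{i_t}^t(x) - d_{i_t}^{t-1}(x))$ for all $x$, where $l_{i_t}^t$ is the stochastic strong-convexity minorant at $x_{t-1}$ built from $\nabla \f_{i_t}(x_{t-1},\rho_t)$. Since $x_t = \prox_{h/\mu}(\bar z_t)$ minimizes the $\mu$-strongly convex $D_t$, applying strong convexity together with the decomposition $D_t(x_t) - D_{t-1}(x_{t-1}) = [D_t(x_t) - D_t(x_{t-1})] + [D_t(x_{t-1}) - D_{t-1}(x_{t-1})]$ isolates a $\tfrac{\mu}{2}\|x_t - x_{t-1}\|^2$ term and the pure model-increment term. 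Taking $\E_{i_t,\rho_t}$ and using both $\E_\rho[l_i^t(x_{t-1})] = f_i(x_{t-1})$ and $\sum_i q_i\,\alpha_t^i = \alpha_t/n$ converts the expected model increment into $\tfrac{\alpha_t}{n}\bigl(F(x_{t-1}) - D_{t-1}(x_{t-1})\bigr)$, producing the desired $(1-\alpha_t/n)$ contraction on the model-gap portion of the Lyapunov function.

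For part (B), convexity of $\|\cdot\|^2$ applied to the update $z_{i_t}^t = (1-\alpha_t^{i_t}) z_{i_t}^{t-1} + \alpha_t^{i_t} y_{i_t}^t$, with $y_{i_t}^t := x_{t-1} - \tfrac{1}{\mu}\nabla \f_{i_t}(x_{t-1},\rho_t)$, gives $\|z_{i_t}^t - z_{i_t}^*\|^2 \leq (1-\alpha_t^{i_t})\|z_{i_t}^{t-1} - z_{i_t}^*\|^2 + \alpha_t^{i_t}\|y_{i_t}^t - z_{i_t}^*\|^2$. A $\rho$-conditional bias/variance split of the last norm isolates $\sigpi^2/\mu^2$ as the variance contribution; the deterministic bias $\|(x_{t-1}-x^*) - \tfrac{1}{\mu}(\nabla f_{i_t}(x_{t-1}) - \nabla f_{i_t}(x^*))\|^2$ is then bounded via $L_i$-smoothness and co-coercivity of $f_{i_t}$, producing terms proportional to $\|x_{t-1} - x^*\|^2$ and to the Bregman divergence of $f_{i_t}$ scaled by $L_i - \mu$. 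Averaging over $i_t \sim q$ with weights $1/(q_i n)$ then yields exactly $\sigpq^2 = \tfrac{1}{n}\sum_i \sigpi^2/(q_i n)$, and $L_q = \max_i (L_i - \mu)/(q_i n)$ controls the residual quadratic terms.

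Combining the bounds, and using that $\alpha_t$ is non-increasing (so the coefficient of the $z$-sum at time $t-1$ dominates the one at time $t$), the cross terms in $\|x_{t-1} - x^*\|^2$ and in the Bregman divergence of $f_{i_t}$, produced by both (A) and (B), must cancel. They do so precisely under the two requirements of~\eqref{eq:alpha_condition_prox_nonu}: $\alpha_1 \leq n q_{\min}/2$ keeps $1 - \alpha_t^{i_t} \geq 1/2$ so that the $z$-recursion is contractive in expectation, while $\alpha_1 \leq n\mu/(4L_q)$ is the exact threshold at which the $(L_i - \mu)$-weighted residual fits inside the slack left by part~(A). This cancellation bookkeeping is the main obstacle: three effects---non-uniform sampling weights $q_i$, per-example smoothness $L_i$, and the non-smooth $h$ handled via the prox---interact simultaneously, and only once they are balanced as dictated by~\eqref{eq:alpha_condition_prox_nonu} does one obtain the clean recursion~\eqref{eq:c_rec_prox_nonu} with noise term $2(\alpha_t/n)^2 \sigpq^2/\mu$.
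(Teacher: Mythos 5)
Your overall architecture matches the paper's: split $C_t^q$ into the model gap $F(x^*)-D_t(x_t)$ and the weighted sum $\sum_i \frac{1}{q_i n}\|z_i^t-z_i^*\|^2$, recurse on each, and choose the coefficient of the second term so the residuals cancel. However, three of your individual steps would not go through as described. First, in part (A), strong convexity of $D_t$ at its minimizer $x_t$ gives $D_t(x_t)\leq D_t(x_{t-1})-\frac{\mu}{2}\|x_t-x_{t-1}\|^2$, i.e.\ an \emph{upper} bound on $D_t(x_t)$ and hence a \emph{lower} bound on $B_t=F(x^*)-D_t(x_t)$ --- the wrong direction. What is needed is a lower bound on $D_t(x_t)$, namely $D_t(x_t)\geq D_t(x_{t-1})-\frac{\mu}{2}\|\bar z_t-\bar z_{t-1}\|^2$ (Eq.~\eqref{eq:miso_prox_lemma}, Lemma~D.4 of MISO-Prox); this comes from the $\mu$-smoothness of the Moreau-envelope part of $D_t$, not from strong convexity, and it leaves a \emph{positive} residual proportional to $\|v_{i_t}^t\|^2$ that must later be absorbed. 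Second, in part (B) you invoke only convexity of $\|\cdot\|^2$, whereas the paper uses the exact identity $\|(1-\lambda)u+\lambda v\|^2=(1-\lambda)\|u\|^2+\lambda\|v\|^2-\lambda(1-\lambda)\|u-v\|^2$; the extra negative term $-\lambda(1-\lambda)\|v_i^t\|^2$ is precisely what cancels the positive $\|v_i^t\|^2$ residual from part (A). That is the true role of the condition $\alpha_1\leq nq_{\min}/2$ (it makes $\frac{\alpha_t}{2n}-\frac{\alpha_t}{n}(1-\frac{\alpha_t}{q_i n})\leq 0$); the $A_t$ recursion is already contractive with Jensen alone, so the justification you give for that condition does not match its function, and with Jensen alone the combination step fails.

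Third, your bias/variance split followed by co-coercivity of $f_{i_t}$ leaves a term proportional to $\|x_{t-1}-x^*\|^2$. In the smooth non-composite proof that term is cancelled by the strong-convexity inner product coming from the recursion on $\frac{1}{2}\|x_t-x^*\|^2$; here the other Lyapunov component is the model gap, which produces no such inner product, so the $\|x_{t-1}-x^*\|^2$ term would survive and the recursion would not close. The paper avoids this via Lemma~\ref{lemma:grad_norm_nonu}: co-coercivity is applied to the shifted functions $\f_i(\cdot,\rho)-\frac{\mu}{2}\|\cdot\|^2$, which are convex and $(L_i-\mu)$-smooth thanks to the per-function strong convexity assumption (A3) (an assumption your argument never uses), and this bounds the entire quantity $\|\nabla\f_i(x,\rho)-\mu x-(\nabla f_i(x^*)-\mu x^*)\|^2=\mu^2\|x-\frac{1}{\mu}\nabla\f_i(x,\rho)-z_i^*\|^2$ in one stroke by $4(L_i-\mu)$ times a Bregman divergence plus the variance at the optimum, with no leftover $\|x_{t-1}-x^*\|^2$. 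These are genuine gaps rather than presentational shortcuts.
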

Note that if we consider the quantity $\E[C_t^q / \mu]$, which is an upper bound on $\frac{1}{2} \E[\|x_t - x^*\|^2]$ by Lemma~\ref{lemma:iterate_bound}, we have the same recursion as~\eqref{eq:c_rec}, and thus can apply Theorem~\ref{thm:smiso_decaying} with the new condition~\eqref{eq:alpha_condition_prox_nonu}.
If we choose
\begin{equation}
\label{eq:q_nonu}
q_i = \frac{1}{2n} + \frac{L_i - \mu}{2\sum_i (L_i - \mu)},
\end{equation}
we have $q_{\min} \geq 1/2n$ and $L_q \leq 2(\bar{L} - \mu)$, where $\bar{L} = \frac{1}{n} \sum_i L_i$. Then, taking $\alpha_1 = \min  (1/4, n \mu / 8(\bar{L} - \mu))$ satisfies~\eqref{eq:alpha_condition_prox_nonu}, and using similar arguments to Section~\ref{sec:convergence_analysis}, the complexity for reaching $\E[\|x_t - x^*\|^2] \leq \epsilon$ is
\begin{equation*}
\label{eq:complexity_nonu}
  O  \left( \left(n + \frac{\bar{L}}{\mu} \right) \log \frac{C_0^q}{\bar{\epsilon}} \right) + O \left( \frac{\sigpq^2}{\mu^2 \epsilon} \right),
\end{equation*}
where $\bar{\epsilon} = 4 \bar{\alpha} \sigpq^2 / n \mu$, and~$\bar{\alpha}$ is the initial constant step-size. For the complexity in function suboptimality, the second term becomes $O(\sigpq^2/\mu \epsilon)$ by using the same averaging scheme presented in Theorem~\ref{thm:averaging} and adapting the proof. Note that with our choice of~$q$, we have $\sigpq^2 \leq \frac{2}{n}\sum_i \sigpi^2 = 2\bar{\sigp}^2$, for general perturbations, where~$\bar{\sigp}^2 = \frac{1}{n}\sum_i \sigpi^2$ is the variance in the uniform case. Additionally, it is often reasonable to assume that the variance from perturbations increases with the norm of examples, for instance Dropout perturbations get larger when coordinates have larger magnitudes. Based on this observation, if we make the assumption that~$\sigpi^2 \propto L_i - \mu$, that is $\sigpi^2 = \bar{\sigp}^2\frac{L_i - \mu}{\bar{L} - \mu}$, then for both $q_i = 1/n$ (uniform case) and $q_i = (L_i - \mu)/n(\bar{L} - \mu)$, we have $\sigpq^2 = \bar{\sigp}^2$, and thus we have $\sigpq^2 \leq \bar{\sigp}^2$ for the choice of~$q$ given in~\eqref{eq:q_nonu}, since~$\sigpq^2$ is convex in~$q$.
Thus, we can expect that the~$O(1/t)$ convergence phase behaves similarly or better than for uniform sampling, which is confirmed by our experiments (see Section~\ref{sec:experiments}).

\section{Proofs for the Smooth Case (Section~\ref{sec:convergence_analysis})}
\label{sec:smooth_proofs}

\subsection{Proof of Proposition~\ref{prop:c_rec} (Recursion on Lyapunov function $C_t$)} 
\label{sub:proof_c_rec}
We begin by stating the following lemma,
which extends a key result of variance reduction methods (see, \eg,~\cite{johnson_accelerating_2013}) to the situation considered in this paper, where one only has access to noisy estimates of the gradients of each~$f_i$.

\begin{appxlemma}
\label{lemma:grad_norm}
Let $i$ be uniformly distributed in $\{1, \ldots, n\}$ and $\rho$ according to a perturbation distribution $\Gamma$. Under assumption~(A2) on the functions $\f_1, \ldots, \f_n$ and their expectations $f_1, \ldots, f_n$, we have, for all~$x\in \R^p$,
\begin{equation*}
\E_{i,\rho} [\|\nabla \f_{i}(x, \rho) - \nabla f_{i}(x^*)\|^2] \leq 4L(f(x) - f(x^*)) + 2 \sigp^2.
\end{equation*}
\end{appxlemma}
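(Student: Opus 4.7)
The plan is to combine the standard variance-reduction bound for smooth convex functions with the definition of $\sigp^2$ via a triangle-inequality-type split. First, I will decompose the squared norm using $\|a + b\|^2 \le 2\|a\|^2 + 2\|b\|^2$ with the choice $a = \nabla \f_i(x, \rho) - \nabla \f_i(x^*, \rho)$ and $b = \nabla \f_i(x^*, \rho) - \nabla f_i(x^*)$, so that the left-hand side is bounded by
\begin{equation*}
2\,\E_{i,\rho}[\|\nabla \f_i(x,\rho) - \nabla \f_i(x^*,\rho)\|^2] + 2\,\E_{i,\rho}[\|\nabla \f_i(x^*,\rho) - \nabla f_i(x^*)\|^2].
\end{equation*}
The second term is, by definition of $\sigpi^2$ and $\sigp^2$, equal to $\frac{1}{n}\sum_i \sigpi^2 = \sigp^2$, contributing the $2\sigp^2$ in the claim.

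For the first term I will invoke the classical co-coercivity consequence of smoothness and convexity: for any $L$-smooth convex $g$ and any $x,y$, $\|\nabla g(x) - \nabla g(y)\|^2 \le 2L\bigl(g(x) - g(y) - \langle \nabla g(y), x - y\rangle\bigr)$. Applying this to $g = \f_i(\cdot, \rho)$ (which is $L$-smooth and convex by (A2)) at $y = x^*$, and then taking the expectation over $\rho$ conditionally on $i$ (using linearity of the gradient in the expectation so that $\E_\rho[\nabla \f_i(x^*,\rho)] = \nabla f_i(x^*)$), gives
\begin{equation*}
\E_\rho[\|\nabla \f_i(x,\rho) - \nabla \f_i(x^*,\rho)\|^2] \le 2L\bigl(f_i(x) - f_i(x^*) - \langle \nabla f_i(x^*), x - x^*\rangle\bigr).
\end{equation*}

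Finally, averaging over $i$ uniform on $\{1,\dots,n\}$ collapses $\frac{1}{n}\sum_i f_i = f$ and uses $\frac{1}{n}\sum_i \nabla f_i(x^*) = \nabla f(x^*) = 0$, leaving a bound of $2L(f(x) - f(x^*))$ on the first term. Multiplying by the factor $2$ from the initial split produces the $4L(f(x) - f(x^*))$ term, and combining with the $2\sigp^2$ contribution yields the stated inequality. I do not expect any real obstacle here; the only delicate point is to be explicit that (A2) gives $L$-smoothness and convexity of each $\f_i(\cdot,\rho)$ (convexity is implicit, being inherited from the setting so that co-coercivity applies), and that the perturbation expectation commutes with the gradient so that $\E_\rho[\nabla \f_i(x^*, \rho)] = \nabla f_i(x^*)$.
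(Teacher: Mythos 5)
Your proposal is correct and follows essentially the same route as the paper's proof: the same split $\|a+b\|^2 \le 2\|a\|^2 + 2\|b\|^2$ into a co-coercivity term and a perturbation-variance term, the same application of the inequality $\|\nabla g(x)-\nabla g(y)\|^2 \le 2L(g(x)-g(y)-\langle \nabla g(y), x-y\rangle)$ to each $\f_i(\cdot,\rho)$, and the same final expectation using $\E_{i,\rho}[\nabla \f_i(x^*,\rho)] = \nabla f(x^*) = 0$. Your remark that convexity of each $\f_i(\cdot,\rho)$ is needed (and only implicit in (A2)) is a fair observation that applies equally to the paper's own argument.
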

\begin{proof}
We have
\begin{align*}
   & \|\nabla \f_i(x, \rho) - \nabla f_i(x^*)\|^2 \\
  &\leq 2 \|\nabla \f_i(x, \rho) - \nabla \f_i(x^*, \rho)\|^2 + 2 \|\nabla \f_i(x^*, \rho) - \nabla f_i(x^*)\|^2 \\
  &\leq 4L(\f_i(x, \rho) - \f_i(x^*, \rho) - \langle \nabla \f_i(x^*, \rho) , x - x^* \rangle) + 2 \|\nabla \f_i(x^*, \rho) - \nabla f_i(x^*)\|^2.
\end{align*}
The first inequality comes from the simple relation $\|u + v\|^2 + \|u - v\|^2 = 2\|u\|^2 + 2\|v\|^2$. The second inequality follows from the smoothness of~$\f_i(\cdot, \rho)$, in particular we used the classical relation
\[
g(y) \geq g(x) + \langle \nabla g(x), y - x \rangle + \frac{1}{2L} \|\nabla g(y) - \nabla g(x)\|^2,
\]
which is known to hold for any convex and $L$-smooth function $g$ (see, \eg,~\cite[Theorem~2.1.5]{nesterov_lectures_2004}).
The result follows by taking expectations on $i$ and $\rho$ and noting that $\E_{i,\rho} [\nabla \f_i(x^*, \rho)] = \nabla f(x^*) = 0$, as well as the definition of~$\sigp^2$.
\end{proof}

We now proceed with the proof of Proposition~\ref{prop:c_rec}.
\begin{proof}

Define the quantities
\begin{align*}
A_t &= \frac{1}{n} \sum_{i=1}^n \|z_i^t - z_i^*\|^2 \\
\text{and} \quad B_t &= \frac{1}{2}\|x_t - x^*\|^2.
\end{align*}
The proof successively describes recursions on~$A_t$, $B_t$, and eventually~$C_t$.
\subparagraph{Recursion on $A_t$.}
We have
\begin{align}
\label{eq:a_t_diff}
A_t \!- \! A_{t-1} &= \frac{1}{n} (\|z_{i_t}^t - z_{i_t}^*\|^2 - \|z_{i_t}^{t-1} - z_{i_t}^*\|^2) \nonumber\\
  &= \frac{1}{n} \left( \left\| (1-\alpha_t) (z_{i_t}^{t-1} - z_{i_t}^*) + \alpha_t \left( x_{t-1} \!-\! \frac{1}{\mu} \nabla \f_{i_t}(x_{t-1}, \rho_t) \!-\! z_{i_t}^* \right) \right\|^2 \!-\! \|z_{i_t}^{t-1} \! - \! z_{i_t}^*\|^2 \right) \nonumber\\
  &= \frac{1}{n} \left( -\alpha_t \|z_{i_t}^{t-1} - z_{i_t}^*\|^2 + \alpha_t \left\|x_{t-1} - \frac{1}{\mu} \nabla \f_{i_t}(x_{t-1}, \rho_t) - z_{i_t}^* \right\|^2 - \alpha_t(1-\alpha_t) \|v_t\|^2 \right),
\end{align}
where we first use the definition of $z_i^t$ in~\eqref{eq:z_update}, then the relation $\|(1-\lambda)u + \lambda v\|^2 = (1-\lambda) \|u\|^2 + \lambda \|v\|^2 - \lambda(1- \lambda) \|u - v\|^2$, and the definition of~$v_t$ given in~\eqref{eq:vt}. A similar relation is derived in the proof of SDCA without duality~\cite{shalev-shwartz_sdca_2016}. Using the definition of $z_i^*$, the second term can be expanded as
\begin{align}
\label{eq:a_t_second_term}
\left\|x_{t-1} - \frac{1}{\mu} \nabla \f_{i_t}(x_{t-1}, \rho_t) - z_{i_t}^* \right\|^2
  &= \left\|x_{t-1} - x^* - \frac{1}{\mu} (\nabla \f_{i_t}(x_{t-1}, \rho_t) - \nabla f_{i_t}(x^*)) \right\|^2 \nonumber\\
  &= \|x_{t-1} - x^*\|^2 -\frac{2}{\mu} \langle x_{t-1} - x^*, \nabla \f_{i_t}(x_{t-1}, \rho_t) - \nabla f_{i_t}(x^*) \rangle \nonumber\\
  &\quad + \frac{1}{\mu^2} \left\| \nabla \f_{i_t}(x_{t-1}, \rho_t) - \nabla f_{i_t}(x^*)) \right\|^2.
\end{align}
We then take conditional expectations with respect to~$\mathcal{F}_{t-1}$, defined in Section~\ref{sec:the_stochastic_miso_algorithm}. Unless otherwise specified, we will simply write~$\E[\cdot]$ instead of $\E[\cdot | \mathcal{F}_{t-1}]$ for these conditional expectations in the rest of the proof.
\begin{align*}
\E \left[ \left\| x_{t-1} - \frac{1}{\mu} \nabla \f_{i_t}(x_{t-1}, \rho_t) - z_{i_t}^* \right\|^2 \right]
  &\leq \|x_{t-1} - x^*\|^2 -\frac{2}{\mu} \langle x_{t-1} - x^*, \nabla f(x_{t-1})\rangle \\
  &\quad + \frac{4L}{\mu^2}(f(x_{t-1}) - f(x^*)) + \frac{2\sigp^2}{\mu^2} \\
  &\leq \|x_{t-1} - x^*\|^2 -\frac{2}{\mu}(f(x_{t-1}) - f(x^*) \!+\! \frac{\mu}{2}\|x_{t-1} \!-\! x^*\|^2) \\
  &\quad + \frac{4L}{\mu^2}(f(x_{t-1}) - f(x^*)) + \frac{2\sigp^2}{\mu^2} \\
  &= \frac{2(2 \kappa - 1)}{\mu}(f(x_{t-1}) - f(x^*)) + \frac{2\sigp^2}{\mu^2},
\end{align*}
where we used $\E [\nabla f_{i_t}(x^*)] = \nabla f(x^*) = 0$,
Lemma~\ref{lemma:grad_norm}, and the $\mu$-strong convexity of $f$.
Taking expectations on the previous relation on $A_t$ yields
\begin{align}
\label{eq:a_rec}
\E[A_t - A_{t-1}] &= -\frac{\alpha_t}{n} A_{t-1}
                     + \frac{\alpha_t}{n} \E \left[ \left\| x_{t-1} - \frac{1}{\mu} \nabla \f_{i_t}(x_{t-1}, \rho_t) - z_{i_t}^* \right\|^2 \right]
                     - \frac{\alpha_t(1 - \alpha_t)}{n} \E[\|v_t\|^2] \nonumber \\
  &\leq -\frac{\alpha_t}{n} A_{t-1} + \frac{2\alpha_t(2 \kappa - 1)}{n\mu} (f(x_{t-1}) - f(x^*)) - \frac{\alpha_t(1 - \alpha_t)}{n} \E[\|v_t\|^2] + \frac{2\alpha_t \sigp^2}{n \mu^2}.
\end{align}
\subparagraph{Recursion on $B_t$.}
Separately, we have
\begin{align*}
\|x_t - x^*\|^2 &= \left\| x_{t-1} - x^* + \frac{\alpha_t}{n} v_t \right\|^2 \\
  &= \|x_{t-1} - x^*\|^2 + \frac{2\alpha_t}{n} \langle x_{t-1} - x^*, v_t \rangle + \left( \frac{\alpha_t}{n} \right)^2 \|v_t\|^2 \\
\E [\|x_t - x^*\|^2] &= \|x_{t-1} - x^*\|^2 - \frac{2\alpha_t}{n \mu} \langle x_{t-1} - x^*, \nabla f(x_{t-1}) \rangle + \left( \frac{\alpha_t}{n} \right)^2 \E [\|v_t\|^2] \\
  &\leq \|x_{t-1} - x^*\|^2 - \frac{2\alpha_t}{n \mu} (f(x_{t-1}) - f(x^*) + \frac{\mu}{2} \|x_{t-1} - x^*\|^2) + \left( \frac{\alpha_t}{n} \right)^2 \E [\|v_t\|^2],
\end{align*}
using that $\E [v_t] = -\frac{1}{\mu} \nabla f(x_{t-1})$ and the strong convexity of $f$. This gives
\begin{equation}
\label{eq:b_rec}
\E[B_t - B_{t-1}] \leq -\frac{\alpha_t}{n} B_{t-1} - \frac{\alpha_t}{n \mu} (f(x_{t-1}) - f(x^*)) + \frac{1}{2} \left( \frac{\alpha_t}{n} \right)^2 \E [\|v_t\|^2].
\end{equation}

\subparagraph{Recursion on $C_t$.}
If we consider $C_t = p_t A_t + B_t$ and $C'_{t-1} = p_t A_{t-1} + B_{t-1}$,
combining~\eqref{eq:a_rec} and~\eqref{eq:b_rec} yields
\begin{multline*}
\E[C_t - C'_{t-1}] \leq \\ -\frac{\alpha_t}{n} C'_{t-1} + \frac{2\alpha_t}{n \mu}(p_t (2 \kappa - 1) - \frac{1}{2}) (f(x_{t-1}) - f(x^*)) + \frac{\alpha_t}{n} \left(\frac{\alpha_t}{2n} - p_t (1 - \alpha_t) \right) \E [\|v_t\|^2] + \frac{2\alpha_t p_t \sigp^2}{n \mu^2}.
\end{multline*}
If we take $p_t = \frac{\alpha_t}{n}$, and if $(\alpha_t)_{t\geq 1}$ is a decreasing sequence satisfying~\eqref{eq:alpha_condition},
then the factors in front of $f(x_{t-1}) - f(x^*)$ and $\E [\|v_t\|^2]$ are non-positive and we get
\[
\E [C_t] \leq \left(1 - \frac{\alpha_t}{n} \right) C'_{t-1} + 2 \left( \frac{\alpha_t}{n} \right)^2 \frac{\sigp^2}{\mu^2}.
\]
Finally, since $\alpha_t \leq \alpha_{t-1}$, we have $C'_{t-1} \leq C_{t-1}$. After taking total expectations on~$\mathcal{F}_{t-1}$, we are left with the desired recursion.

\end{proof}

\subsection{Proof of Theorem~\ref{thm:smiso_decaying} (Convergence of $C_t$ under decreasing step-sizes)} 
\label{sub:proof_smiso_decaying}
We prove the theorem with more general step-sizes:
\begin{appxtheorem}[Convergence of Lyapunov function]
Let the sequence of step-sizes $(\alpha_t)_{t \geq 1}$ be defined by $\alpha_t = \frac{\beta n}{\gamma + t}$ with $\beta > 1$ and $\gamma \geq 0$ such that~$\alpha_1$ satisfies~\eqref{eq:alpha_condition}.
For all $t \geq 0$, it holds that
\begin{equation}
  \E [C_t] \leq \frac{\nu}{\gamma + t + 1}
   ~~~~\text{where}~~~~
\nu := \max \left\{ \frac{2 \beta^2 \sigp^2}{\mu^2 (\beta - 1)}, (\gamma + 1)C_0 \right\}.
\end{equation}
\end{appxtheorem}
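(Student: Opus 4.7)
The plan is to prove the bound $\E[C_t] \leq \nu/(\gamma+t+1)$ by induction on $t$, feeding the one-step recursion from Proposition~\ref{prop:c_rec} into the inductive hypothesis. The base case $t=0$ is immediate from the definition of $\nu$, since $\nu \geq (\gamma+1)C_0$ by construction.

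For the inductive step, assume $\E[C_{t-1}] \leq \nu/(\gamma+t)$. Writing $\alpha_t/n = \beta/(\gamma+t)$ and substituting into~\eqref{eq:c_rec}, I would obtain
\begin{equation*}
\E[C_t] \leq \frac{\gamma+t-\beta}{(\gamma+t)^2}\,\nu + \frac{2\beta^2 \sigma_p^2}{\mu^2 (\gamma+t)^2}.
\end{equation*}
The goal is then to show this is at most $\nu/(\gamma+t+1)$. Setting $s=\gamma+t$ and clearing denominators by multiplying through by $s^2(s+1)$, this reduces to the algebraic inequality $2\beta^2 \sigma_p^2 (s+1)/\mu^2 \leq \nu\bigl((\beta-1)s + \beta\bigr)$.

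The essential point is the elementary bound $(s+1)/((\beta-1)s+\beta) \leq 1/(\beta-1)$ for $\beta>1$, which follows from $(\beta-1)(s+1) \leq (\beta-1)s+\beta$, i.e., $\beta-1 \leq \beta$. Hence it suffices to require $\nu \geq 2\beta^2 \sigma_p^2 / (\mu^2(\beta-1))$, which is exactly the first term in the definition of $\nu$. This closes the induction.

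There is no real obstacle here beyond keeping the algebra straight; the only subtle point is that both branches of the max defining $\nu$ play distinct roles (one for the base case, one for propagating the recursion), and the condition $\beta>1$ is needed for the noise-term bound to make sense. Specializing to $\beta=2$ gives the stated form $\nu = \max\{8\sigma_p^2/\mu^2,(\gamma+1)C_0\}$ in Theorem~\ref{thm:smiso_decaying}.
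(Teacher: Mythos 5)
Your proof is correct and follows essentially the same route as the paper: induction on $t$ using the recursion of Proposition~\ref{prop:c_rec}, with the two branches of the $\max$ handling the base case and the noise term respectively. The only difference is cosmetic --- you clear denominators and verify the resulting polynomial inequality directly, whereas the paper splits $(\hat{t}-\beta)/\hat{t}^2$ as $(\hat{t}-1)/\hat{t}^2 - (\beta-1)/\hat{t}^2$ and invokes $\hat{t}^2 \geq (\hat{t}+1)(\hat{t}-1)$; both reduce to the same condition $\nu \geq 2\beta^2\sigp^2/(\mu^2(\beta-1))$.
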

In particular, taking~$\beta = 2$ as in Theorem~\ref{thm:smiso_decaying} can only improve the constant~$\nu$ in the convergence rate.

\begin{proof}
Let us proceed by induction. We have $C_0 \leq \nu / (\gamma + 1)$ by definition of $\nu$. For $t \geq 1$,
\begin{align*}
\E [C_t] &\leq \left( 1 - \frac{\alpha_t}{n} \right) \E [C_{t-1}] + 2 \left( \frac{\alpha_t}{n} \right)^2 \frac{\sigp^2}{\mu^2} \\
  &\leq \left(1 - \frac{\beta}{\hat{t}} \right) \frac{\nu}{\hat{t}} + \frac{2 \beta^2 \sigp^2}{\hat{t}^2 \mu^2} \quad \text{ (with $\hat{t} := \gamma + t$)} \\
  &= \left(\frac{\hat{t} - \beta}{\hat{t}^2} \right) \nu + \frac{2 \beta^2 \sigp^2}{\hat{t}^2 \mu^2} \\
  &= \left(\frac{\hat{t} - 1}{\hat{t}^2} \right) \nu - \left(\frac{\beta - 1}{\hat{t}^2} \right) \nu + \frac{2 \beta^2 \sigp^2}{\hat{t}^2 \mu^2} \\
  &\leq \left(\frac{\hat{t} - 1}{\hat{t}^2} \right) \nu \leq \frac{\nu}{\hat{t} + 1},
\end{align*}
where the last two inequalities follow from the definition of $\nu$ and from $\hat{t}^2 \geq (\hat{t} + 1)(\hat{t} - 1)$.
\end{proof}


\subsection{Proof of Theorem~\ref{thm:averaging} (Convergence in function values under iterate averaging)} 
\label{sub:proof_averaging}

\begin{proof}
From the proof of Proposition~\ref{prop:c_rec}, we have
\begin{equation*}
\E[C_t] \leq \left(1 -\frac{\alpha_t}{n}\right) \E[C_{t-1}] + \frac{2\alpha_t}{n \mu} \left(\frac{\alpha_t}{n} (2 \kappa - 1) - \frac{1}{2} \right) \E[f(x_{t-1}) - f(x^*)] + 2 \left( \frac{\alpha_t}{n} \right)^2 \frac{\sigp^2}{\mu^2}.
\end{equation*}
The result holds because the choice of step-sizes $(\alpha_t)_{t\geq 1}$ safisfies the assumptions of Proposition~\ref{prop:c_rec}. With our new choice of step-sizes, we have the stronger bound
\[
\frac{\alpha_t}{n} (2 \kappa - 1) - \frac{1}{2} \leq -\frac{1}{4}.
\]
After rearranging, we obtain
\begin{equation}
\label{eq:c_rec_avg}
\frac{\alpha_t}{2n \mu} \E[f(x_{t-1}) - f(x^*)] \leq \left(1 -\frac{\alpha_t}{n}\right) \E[C_{t-1}] - \E[C_t] + 2 \left( \frac{\alpha_t}{n} \right)^2 \frac{\sigp^2}{\mu^2}.
\end{equation}
Dividing by $\frac{\alpha_t}{2n \mu}$ gives
\begin{align*}
\E[f(x_{t-1}) - f(x^*)]
	&\leq 2 \mu \left[ \left(\frac{n}{\alpha_t} - 1\right) \E[C_{t-1}] - \frac{n}{\alpha_t} \E[C_t] \right] + 4 \frac{\alpha_t}{n} \frac{\sigp^2}{\mu}\\
	&= \mu \left( (\gamma + t - 2) \E[C_{t-1}] - (\gamma + t) \E[C_t] \right) + \frac{8}{\gamma + t} \frac{\sigp^2}{\mu}.
\end{align*}
Multiplying by $(\gamma + t - 1)$ yields
\begin{multline*}
(\gamma + t - 1)\E[f(x_{t-1}) - f(x^*)] \\
	\leq \mu \left( (\gamma + t - 1)(\gamma + t - 2) \E[C_{t-1}] - (\gamma + t)(\gamma + t - 1) \E[C_t] \right) + \frac{8(\gamma + t - 1)}{\gamma + t} \frac{\sigp^2}{\mu}\\
	\leq \mu \left( (\gamma + t - 1)(\gamma + t - 2) \E[C_{t-1}] - (\gamma + t)(\gamma + t - 1) \E[C_t] \right) + \frac{8\sigp^2}{\mu}.
\end{multline*}
By summing the above inequality from $t = 1$ to $t = T$, we have a telescoping sum that simplifies as follows:
\begin{align*}
\E \left[\sum_{t=1}^T (\gamma + t - 1)(f(x_{t-1}) - f(x^*)) \right]
	&\leq \mu \left( \gamma(\gamma - 1) C_0 - (\gamma + T)(\gamma + T - 1) \E[C_T] \right) + \frac{8T\sigp^2}{\mu}\\
	&\leq \mu \gamma(\gamma - 1) C_0 + \frac{8T\sigp^2}{\mu}.
\end{align*}
Dividing by $\sum_{t=1}^T (\gamma + t - 1) = (2T \gamma + T(T-1))/2$ and using Jensen's inequality on~$f(\bar{x}_T)$ gives the desired result.
\end{proof}

\section{Proofs for Composite Objectives and Non-Uniform Sampling (Appendix~\ref{sec:extensions})}
\label{sec:prox_nonu_proofs}

We recall here the updates to the lower bounds~$d_i^t$ in the setting of this section, which are analogous to~\eqref{eq:lb_update} but with non-uniform weights and stochastic perturbations,:
for $i=i_t$, we have
\begin{equation}
\label{eq:lb_update_prox_nonu}
d_i^t(x) =
  \Big(1-\frac{\alpha_t}{q_i n}\Big)d_i^{t-1}(x) + \frac{\alpha_t}{q_i n}\Big(\f_i(x_{t-1}, \rho_t) + \langle \nabla \f_i(x_{t-1}, \rho_t), x - x_{t-1} \rangle + \frac{\mu}{2}\|x - x_{t-1}\|^2\Big),
\end{equation}
and $d_i^t(x) = d_i^{t-1}(x)$ otherwise.

\subsection{Proof of Lemma~\ref{lemma:iterate_bound} (Bound on the iterates)} 
\label{sub:proof_iterate_bound}
\begin{proof}
Let $F_t(x) := \frac{1}{n} \sum_{i=1}^n f_i^t(x) + h(x)$, where~$f_i^0(x) = \f_i(x, \tilde{\rho}_i)$ (where $\tilde{\rho}_i$ is used in~\eqref{eq:miso_init}), and~$f_i^t$ is updated analogously to~$d_i^t$ as follows:
\begin{equation*}
f_i^t(x) = 
\begin{cases}
  (1-\frac{\alpha_t}{q_i n})f_i^{t-1}(x) + \frac{\alpha_t}{q_i n} \f_i(x, \rho_t), &\text{ if }i = i_t\\
  f_i^{t-1}(x), &\text{ otherwise.}
\end{cases}
\end{equation*}
By induction, we have
\begin{equation}
\label{eq:ft_lb}
F_t(x^*) \geq D_t(x^*) \geq D_t(x_t) + \frac{\mu}{2} \|x_t - x^*\|^2,
\end{equation}
where the last inequality follows from the $\mu$-strong convexity of~$D_t$ and the fact that~$x_t$ is its minimizer.

Again by induction, we now show that $\E[F_t(x^*)] = F(x^*)$. Indeed, we have $\E[F_0(x^*)] = F(x^*)$ by construction, then
\begin{align*}
F_t(x^*) &= F_{t-1}(x^*) + \frac{\alpha_t}{q_i n^2}(\f_{i_t}(x^*, \rho_t) - f_{i_t}^{t-1}(x^*)) \\
\E[F_t(x^*) | \mathcal{F}_{t-1}] &= F_{t-1}(x^*) + \frac{\alpha_t}{n}(f(x^*) - \frac{1}{n} \sum_{i=1}^n f_i^{t-1}(x^*)) \\
	&= F_{t-1}(x^*) + \frac{\alpha_t}{n}(F(x^*) - F_{t-1}(x^*)),
\end{align*}
After taking total expectations and using the induction hypothesis, we obtain $\E[F_t(x^*)] = F(x^*)$, and the result follows from~\eqref{eq:ft_lb}.
\end{proof}


\subsection{Proof of Proposition~\ref{prop:c_rec_prox_nonu} (Recursion on Lyapunov function $C_t^q$)} 
\label{sub:proof_of_proposition_prop:c_rec_prox_nonu}

We begin by presenting a lemma that plays a similar role to Lemma~\ref{lemma:grad_norm} in our proof, but considers the composite objective and takes into account the new strong convexity and non-uniformity assumptions.
\begin{appxlemma}
\label{lemma:grad_norm_nonu}
Let $i \sim q$, where $q$ is the sampling distribution, and $\rho$ be a random perturbation. Under assumptions~(A4-5) on the functions $\f_1, \ldots, \f_n$ and their expectations $f_1, \ldots, f_n$, we have, for all~$x\in \R^p$,
\begin{equation*}
\E_{i,\rho} \left[ \frac{1}{(q_i n)^2} \|\nabla \f_{i}(x, \rho) - \mu x - (\nabla f_{i}(x^*) - \mu x^*)\|^2 \right] \leq 4 L_q(F(x) - F(x^*)) + 2 \sigpq^2,
\end{equation*}
with $L_q = \max_i \frac{L_i - \mu}{q_i n}$ and $\sigpq^2 = \frac{1}{n} \sum_i \frac{\sigpi^2}{q_i n}$.
\end{appxlemma}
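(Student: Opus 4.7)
The plan is to mimic the proof of Lemma~A.1 (Lemma~\ref{lemma:grad_norm}) but with three adjustments: (i) peel off the $\mu$-strongly-convex part so we can invoke smoothness of $\tilde{f}_i(\cdot,\rho) - (\mu/2)\|\cdot\|^2$ with constant $L_i - \mu$ (this is the convex envelope hiding inside the $L_i$-smooth, $\mu$-strongly-convex $\tilde{f}_i(\cdot,\rho)$), (ii) carry the weight $1/(q_in)^2$ through the expectation over $i\sim q$ so that the resulting constant reduces to $L_q$ and $\sigma_q^2$, and (iii) close the gap from $f(x)-f(x^*)$ to $F(x)-F(x^*)$ using the optimality condition for the composite problem.

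First, I would apply the inequality $\|u+v\|^2 \leq 2\|u\|^2 + 2\|v\|^2$ to write
\begin{align*}
\|\nabla\tilde{f}_i(x,\rho) - \mu x - (\nabla f_i(x^*) - \mu x^*)\|^2
  &\leq 2\|\nabla\tilde{f}_i(x,\rho) - \mu x - (\nabla\tilde{f}_i(x^*,\rho) - \mu x^*)\|^2 \\
  &\quad + 2\|\nabla\tilde{f}_i(x^*,\rho) - \nabla f_i(x^*)\|^2.
\end{align*}
For the first term, set $g_{i,\rho}(x) = \tilde{f}_i(x,\rho) - (\mu/2)\|x\|^2$. By (A3)--(A4), $g_{i,\rho}$ is convex and $(L_i-\mu)$-smooth, so by the standard co-coercivity inequality (\cite[Theorem~2.1.5]{nesterov_lectures_2004}) used already in the proof of Lemma~\ref{lemma:grad_norm},
\[
\|\nabla g_{i,\rho}(x) - \nabla g_{i,\rho}(x^*)\|^2 \leq 2(L_i-\mu)\bigl(g_{i,\rho}(x) - g_{i,\rho}(x^*) - \langle \nabla g_{i,\rho}(x^*), x-x^*\rangle\bigr).
\]
Expanding $g_{i,\rho}$ and writing the right-hand side back in terms of $\tilde{f}_i$ yields an upper bound proportional to $\tilde{f}_i(x,\rho) - \tilde{f}_i(x^*,\rho) - \langle \nabla\tilde{f}_i(x^*,\rho), x-x^*\rangle - (\mu/2)\|x-x^*\|^2$.

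Next I would take expectation over $i\sim q$ and $\rho$. The weighting is the crucial bookkeeping step: multiplying by $1/(q_in)^2$ and summing against $q_i$ gives a prefactor $\frac{1}{n}\cdot\frac{1}{q_in}$, so the first term contributes
\[
\frac{4}{n}\sum_i \frac{L_i-\mu}{q_in}\,\E_\rho\!\bigl[\tilde{f}_i(x,\rho) - \tilde{f}_i(x^*,\rho) - \langle \nabla\tilde{f}_i(x^*,\rho), x-x^*\rangle\bigr] - 2L_q\mu\|x-x^*\|^2,
\]
which using $(L_i-\mu)/(q_in)\leq L_q$ and $\E_\rho[\tilde{f}_i(\cdot,\rho)]=f_i(\cdot)$ is at most $4L_q\bigl(f(x)-f(x^*)-\langle \nabla f(x^*), x-x^*\rangle\bigr)$ (the $-(\mu/2)\|x-x^*\|^2$ term can be dropped since it is non-positive). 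The second term, by definition of $\sigma_i^2$ and $\sigma_q^2$, averages exactly to $2\sigma_q^2$.

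Finally, I would convert $f$ into $F$. By the first-order optimality condition for the composite problem~\eqref{eq:augm_objective}, $-\nabla f(x^*) \in \partial h(x^*)$, so convexity of $h$ gives $-\langle \nabla f(x^*), x - x^*\rangle \leq h(x) - h(x^*)$, and therefore
\[
f(x) - f(x^*) - \langle \nabla f(x^*), x - x^*\rangle \leq F(x) - F(x^*).
\]
Combining this with the previous bound gives the claimed $4L_q(F(x)-F(x^*)) + 2\sigma_q^2$. The only slightly non-routine step is the non-uniform sampling bookkeeping: one must be careful that the $1/(q_in)^2$ factor inside the expectation collapses against $q_i$ to produce a single factor $1/(q_in)$ that can be capped by $L_q$, and that the same collapse produces $\sigma_q^2$ (not $\sigma_p^2$) for the noise term.
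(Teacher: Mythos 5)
Your proposal is correct and follows essentially the same route as the paper's proof: the same $\|u+v\|^2\leq 2\|u\|^2+2\|v\|^2$ split, the same use of co-coercivity for the convex $(L_i-\mu)$-smooth function $\f_i(\cdot,\rho)-\frac{\mu}{2}\|\cdot\|^2$, the same collapse of $q_i/(q_in)^2$ into the weighted averages defining $L_q$ and $\sigpq^2$, and the same final appeal to $-\nabla f(x^*)\in\partial h(x^*)$ to pass from $f$ to $F$. No gaps.
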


\begin{proof}
Since $\f_i(\cdot, \rho)$ is $\mu$-strongly convex and $L_i$-smooth, we have that $\f_i(\cdot, \rho) - \frac{\mu}{2} \|\cdot\|^2$ is convex and $(L_i - \mu)$-smooth (this is a straightforward consequence of~\cite[Eq.~2.1.9 and 2.1.22]{nesterov_lectures_2004}). Then,  by denoting by $F_i$ the quantity $2 \|\nabla \f_i(x^*, \rho) - \nabla f_i(x^*)\|^2$, we have
\begin{align*}
\|\nabla &\f_i(x, \rho) - \mu x - (\nabla f_i(x^*) - \mu x^*)\|^2 \\
  &\leq 2 \|\nabla \f_i(x, \rho) - \mu x - (\nabla \f_i(x^*, \rho) - \mu x^*)\|^2 + 2 \|\nabla \f_i(x^*, \rho) - \nabla f_i(x^*)\|^2 \\
  &\leq 4(L_i - \mu) \left(\f_i(x, \rho) - \frac{\mu}{2} \|x\|^2 - \f_i(x^*, \rho) + \frac{\mu}{2} \|x^*\|^2 - \langle \nabla \f_i(x^*, \rho) - \mu x^*, x - x^* \rangle \right) + F_i \\
  &= 4(L_i - \mu) \left(\f_i(x, \rho) - \f_i(x^*, \rho) - \langle \nabla \f_i(x^*, \rho), x - x^* \rangle - \frac{\mu}{2} \|x - x^*\|^2 \right) + F_i \\
  &\leq 4(L_i - \mu) \left(\f_i(x, \rho) - \f_i(x^*, \rho) - \langle \nabla \f_i(x^*, \rho), x - x^* \rangle \right) + F_i.
\end{align*}
The first inequality comes from the classical relation $\|u + v\|^2 + \|u - v\|^2 = 2\|u\|^2 + 2\|v\|^2$. The second inequality follows from the convexity and $(L_i - \mu)$-smoothness of~$\f_i(\cdot, \rho) - \frac{\mu}{2} \|\cdot\|^2$. Dividing by $(q_i n)^2$ and taking expectations yields
\begin{align*}
\E_{i,\rho} \left[ \frac{1}{(q_i n)^2} \right.&\left. \vphantom{\frac{1}{(q_i)^2}} \|\nabla \f_{i}(x, \rho) - \mu x - (\nabla f_{i}(x^*) - \mu x^*)\|^2 \right] \\
	&\leq 4 \sum_{i=1}^n \frac{q_i (L_i - \mu)}{(q_i n)^2} \left(f_i(x) - f_i(x^*) - \langle \nabla f_i(x^*), x - x^* \rangle \right) + 2  \sum_{i=1}^n \frac{q_i}{(q_i n)^2} \sigpi^2 \\
	&= 4 \frac{1}{n} \sum_{i=1}^n \frac{L_i - \mu}{q_i n} \left(f_i(x) - f_i(x^*) - \langle \nabla f_i(x^*), x - x^* \rangle \right) + 2 \frac{1}{n} \sum_{i=1}^n \frac{\sigpi^2}{q_i n} \\
	&\leq 4 L_q (f(x) - f(x^*) - \langle \nabla f(x^*), x - x^* \rangle) + 2 \sigpq^2 \\
	&\leq 4 L_q (f(x) - f(x^*) + h(x) - h(x^*)) + 2 \sigpq^2 = 4 L_q (F(x) - F(x^*)) + 2 \sigpq^2,
\end{align*}
where the last inequality follows from the optimality of~$x^*$, which implies that~$-\nabla f(x^*) \in \partial h(x^*)$, and in turn implies $\langle -\nabla f(x^*), x - x^* \rangle \leq h(x) - h(x^*)$ by convexity of~$h$.
\end{proof}

We can now proceed with the proof of Proposition~\ref{prop:c_rec_prox_nonu}.

\begin{proof}
Define the quantities
\begin{align*}
A_t &= \frac{1}{n} \sum_{i=1}^n \frac{1}{q_i n} \|z_i^t - z_i^*\|^2 \\
\text{and} \quad B_t &= F(x^*) - D_t(x_t).
\end{align*}
The proof successively describes recursions on~$A_t$, $B_t$, and eventually~$C_t$ (we drop the superscript in~$C_t^q$ for simplicity), using the same approach as for the proof of Proposition~\ref{prop:c_rec}.
\subparagraph{Recursion on $A_t$.}
Using similar techniques as in the proof of Proposition~\ref{prop:c_rec}, we have
\begin{align*}
   & A_t - A_{t-1} \\ &= \frac{1}{n} \left( \frac{1}{q_{i_t} n}\|z_{i_t}^t - z_{i_t}^*\|^2 - \frac{1}{q_{i_t} n}\|z_{i_t}^{t-1} - z_{i_t}^*\|^2 \right)\\
	&\!=\! \frac{1}{n} \left( \frac{1}{q_{i_t} n} \left\| \left(1 \!-\! \frac{\alpha_t}{q_{i_t} n} \right)(z_{i_t}^{t\!-\!1} \!-\! z_{i_t}^* )
		+ \frac{\alpha_t}{q_{i_t} n} \left( x_{t\!-\!1} \!-\! \frac{1}{\mu} \nabla \f_{i_t}(x_{t\!-\!1}, \rho_t) \!-\! z_{i_t}^* \right) \right\|^2
		\!-\! \frac{1}{q_{i_t} n}\|z_{i_t}^{t\!-\!1} \!-\! z_{i_t}^*\|^2 \right)\\
	&\!=\! \frac{1}{n} \Big( \!-\frac{\alpha_t}{(q_{i_t} n)^2} \|z_{i_t}^{t\!-\!1} \!-\! z_{i_t}^*\|^2
		\!+\! \frac{\alpha_t}{(q_{i_t} n)^2} \left\|x_{t\!-\!1} \!-\! \frac{1}{\mu} \nabla \f_{i_t}(x_{t\!-\!1}, \rho_t) \!-\! z_{i_t}^* \right\|^2
		\!-\! \frac{\alpha_t}{(q_{i_t} n)^2}\Big(1\!-\!\frac{\alpha_t}{q_{i_t} n} \Big) \|v_{i_t}^t\|^2 \Big),
\end{align*}
where $v_i^t := x_{t-1} - \frac{1}{\mu} \nabla \f_{i}(x_{t-1}, \rho_t) - z_{i}^{t-1}$. Taking conditional expectations w.r.t.~$\mathcal{F}_{t-1}$ and using Lemma~\ref{lemma:grad_norm_nonu} to bound the second term yields
\begin{multline}
\label{eq:a_rec_prox_nonu}
\E[A_t - A_{t-1}] \leq \\ -\frac{\alpha_t}{n} A_{t-1} + \frac{4\alpha_t L_q}{n\mu^2} (F(x_{t-1}) - F(x^*)) + \frac{2\alpha_t \sigpq^2}{n \mu^2}  - \frac{1}{n} \sum_{i=1}^n \left( \frac{\alpha_t}{n} \frac{1}{q_i n} \left(1 - \frac{\alpha_t}{q_i n}\right) \|v_i^t\|^2 \right)
\end{multline}

\subparagraph{Recursion on $B_t$.}
We start by using a lemma from the proof of MISO-Prox~\cite[Lemma D.4]{lin_universal_2015}, which only relies on the form of~$D_t$ and the fact that~$x_t$ minimizes it, and thus holds in our setting:
\begin{align}
\label{eq:miso_prox_lemma}
D_t(x_t) &\geq D_t(x_{t-1}) - \frac{\mu}{2}\|\bar{z}_t - \bar{z}_{t-1}\|^2 \nonumber\\
	&= D_t(x_{t-1}) - \frac{\mu}{2(q_{i_t} n)^2} \left( \frac{\alpha_t}{n} \right)^2\|v_{i_t}^t\|^2
\end{align}
We then expand~$D_t(x_{t-1})$ using~\eqref{eq:lb_update_prox_nonu} as follows:
\begin{align*}
D_t(x_{t-1}) &= D_{t-1}(x_{t-1}) + \frac{\alpha_t}{n} \frac{1}{q_{i_t} n} \left( \f_{i_t}(x_{t-1}, \rho_t) - d_{i_t}^{t-1}(x_{t-1})\right) \\
	&= D_{t-1}(x_{t-1}) + \frac{\alpha_t}{n} \frac{1}{q_{i_t} n} \left( \f_{i_t}(x_{t-1}, \rho_t) + h(x_{t-1}) - d_{i_t}^{t-1}(x_{t-1}) - h(x_{t-1})\right).
\end{align*}
After taking conditional expections w.r.t.~$\mathcal{F}_{t-1}$, \eqref{eq:miso_prox_lemma} becomes
\begin{align*}
\E[D_t(x_t)] &\geq D_{t-1}(x_{t-1}) + \frac{\alpha_t}{n} (F(x_{t-1}) - D_{t-1}(x_{t-1})) - \frac{\mu}{2n} \sum_{i=1}^n \left( \frac{\alpha_t}{n} \right)^2 \frac{1}{q_i n} \|v_i^t\|^2.
\end{align*}
Subtracting~$F(x^*)$ and rearranging yields
\begin{equation}
\label{eq:b_rec_prox_nonu}
\E [ B_t  - B_{t-1}] \leq - \frac{\alpha_t}{n} B_{t-1} - \frac{\alpha_t}{n} (F(x_{t-1}) - F(x^*)) + \frac{\mu}{2n} \sum_{i=1}^n \left( \frac{\alpha_t}{n} \right)^2 \frac{1}{q_i n} \|v_i^t\|^2.
\end{equation}

\subparagraph{Recursion on $C_t$.}
If we consider $C_t = \mu p_t A_t + B_t$ and $C'_{t-1} = \mu p_t A_{t-1} + B_{t-1}$,
combining~\eqref{eq:a_rec_prox_nonu} and~\eqref{eq:b_rec_prox_nonu} yields
\begin{equation}
\label{eq:c_rec_prox_nonu_proof}
   \E[C_t - C'_{t-1}] \leq -\frac{\alpha_t}{n} C'_{t-1} + \frac{2\alpha_t}{n}\Big(\frac{2 p_t L_q}{\mu} - \frac{1}{2}\Big) (F(x_{t-1}) - F(x^*)) + \frac{\mu \alpha_t}{n^2} \sum_{i=1}^n \frac{\delta_i^t}{q_i n} \|v_i^t\|^2 + \frac{2\alpha_t p_t \sigpq^2}{n \mu},
\end{equation}
with
\[
\delta_i^t = \frac{\alpha_t}{2n} - p_t\left(1 - \frac{\alpha_t}{q_i n}\right).
\]
If we take $p_t = \frac{\alpha_t}{n}$, and if $(\alpha_t)_{t\geq 1}$ is a decreasing sequence satisfying~\eqref{eq:alpha_condition_prox_nonu}, then we obtain the desired recursion after noticing that $C'_{t-1} \leq C_{t-1}$ and taking total expectations on~$\mathcal{F}_{t-1}$.

\end{proof}

Note that if we take
\begin{equation*}
\alpha_1 \leq \min \left\{ \frac{n q_{\min}}{2}, \frac{n \mu}{8 L_q} \right\},
\end{equation*}
then~\eqref{eq:c_rec_prox_nonu_proof} yields
\[
\E \left[\frac{C_t^q}{\mu}\right] \leq \left( 1 - \frac{\alpha_t}{n} \right) \E \left[\frac{C_{t-1}^q }{\mu}\right] - \frac{\alpha_t}{2n \mu} (F(x_{t-1}) - F(x^*)) + 2 \left( \frac{\alpha_t}{n} \right)^2 \frac{\sigpq^2}{\mu^2}.
\]
This relation takes the same form as Eq.~\eqref{eq:c_rec_avg}, hence it is straightforward to adapt the proof of Theorem~\ref{thm:averaging} to this setting, and the same iterate averaging scheme applies.

\section{Complexity Analysis of SGD} 
\label{sec:sgd_recursions}
In this section, we provide a proof of a simple result for SGD in the smooth case, giving a recursion that depends on a variance condition at the optimum (in contrast to~\cite{bottou_optimization_2016,nemirovski_robust_2009} where this condition needs to hold everywhere), for a more natural comparison with S-MISO.
\begin{appxproposition}[Simple SGD recursion with variance at optimum]
\label{prop:sgd_rec}
Under assumptions (A1) and (A2), if~$\eta_t \leq 1/2L$, then the SGD recursion $x_t := x_{t-1} - \eta_t \nabla \f_{i_t}(x_{t-1}, \rho_t)$ satisfies
\begin{equation*}
B_t \leq (1 - \mu \eta_t) B_{t-1} + \eta_t^2 \sigtot^2,
\end{equation*}
where $B_t := \frac{1}{2} \E [ \|x_t - x^* \|^2 ]$ and $\sigtot$ is such that
\[
\E_{i,\rho} \left[ \| \nabla \f_i(x^*, \rho) \|^2 \right] \leq \sigtot^2.
\]
\end{appxproposition}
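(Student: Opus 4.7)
The plan is to follow the standard one-step SGD descent analysis, but replace the usual uniform variance bound with the variance-at-optimum trick already established in Lemma~\ref{lemma:grad_norm}.

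First, I would expand the SGD step to obtain
\[
\|x_t - x^*\|^2 = \|x_{t-1} - x^*\|^2 - 2\eta_t \langle x_{t-1} - x^*, \nabla \f_{i_t}(x_{t-1}, \rho_t) \rangle + \eta_t^2 \|\nabla \f_{i_t}(x_{t-1}, \rho_t)\|^2.
\]
Taking conditional expectation with respect to the natural filtration $\mathcal{F}_{t-1}$ turns the linear term into $-2\eta_t \langle x_{t-1} - x^*, \nabla f(x_{t-1})\rangle$ by unbiasedness of the stochastic gradient, and $\mu$-strong convexity of $f$ from (A1) lets me lower-bound this by $-2\eta_t(f(x_{t-1}) - f(x^*)) - \mu \eta_t \|x_{t-1} - x^*\|^2$.

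The key step is to control the quadratic term $\E[\|\nabla \f_{i_t}(x_{t-1}, \rho_t)\|^2]$. I would use the splitting $\|\nabla \f_i(x,\rho)\|^2 \leq 2\|\nabla \f_i(x,\rho) - \nabla \f_i(x^*,\rho)\|^2 + 2\|\nabla \f_i(x^*,\rho)\|^2$, then apply exactly the cocoercivity argument from the proof of Lemma~\ref{lemma:grad_norm} (which uses (A2) and Nesterov's inequality) to bound the first piece, after taking expectations, by $4L(f(x_{t-1}) - f(x^*))$; the second piece is bounded by $2\sigtot^2$ by definition of $\sigtot$. Thus
\[
\E[\|\nabla \f_{i_t}(x_{t-1},\rho_t)\|^2 \mid \mathcal{F}_{t-1}] \leq 4L(f(x_{t-1}) - f(x^*)) + 2\sigtot^2.
\]

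Assembling these bounds gives
\[
\E[\|x_t - x^*\|^2 \mid \mathcal{F}_{t-1}] \leq (1-\mu \eta_t)\|x_{t-1}-x^*\|^2 - 2\eta_t(1 - 2L\eta_t)(f(x_{t-1}) - f(x^*)) + 2\eta_t^2 \sigtot^2.
\]
The step-size hypothesis $\eta_t \leq 1/(2L)$ is tailored exactly so that $1 - 2L\eta_t \geq 0$, making the middle term non-positive and allowing it to be discarded. Dividing by $2$ and taking total expectations yields the claim.

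There is no real obstacle here: the single non-routine ingredient is the variance-at-optimum bound, which is already established as Lemma~\ref{lemma:grad_norm}; the rest is bookkeeping. The only point to watch is that the step-size condition $\eta_t \le 1/(2L)$ is precisely what cancels the $f(x_{t-1})-f(x^*)$ contribution coming from the cocoercivity bound on the second moment, so that no function-value term leaks into the recursion.
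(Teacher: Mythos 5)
Your proposal is correct and follows essentially the same route as the paper: expand the squared distance, use strong convexity on the cross term, split the second moment via $\|a\|^2 \le 2\|a-b\|^2 + 2\|b\|^2$, bound the first piece by $4L\eta_t^2(f(x_{t-1})-f(x^*))$ using the cocoercivity argument from the proof of Lemma~\ref{lemma:grad_norm}, and discard the resulting $-2\eta_t(1-2L\eta_t)(f(x_{t-1})-f(x^*))$ term thanks to $\eta_t \le 1/2L$. The paper's proof is identical in structure and constants.
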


\begin{proof}
We have
\begin{align*}
\|x_t - x^*\|^2 &= \|x_{t-1} - x^*\|^2 - 2 \eta_t \langle \nabla \f_{i_t}(x_{t-1}, \rho_t), x_{t-1} - x^* \rangle + \eta_t^2 \|\nabla \f_{i_t}(x_{t-1}, \rho_t)\|^2 \\
	&\leq \|x_{t-1} - x^*\|^2 - 2 \eta_t \langle \nabla \f_{i_t}(x_{t-1}, \rho_t), x_{t-1} - x^* \rangle \\
	&\quad + 2\eta_t^2 \|\nabla \f_{i_t}(x_{t-1}, \rho_t) - \nabla \f_{i_t}(x^*, \rho_t)\|^2 + 2\eta_t^2 \|\nabla \f_{i_t}(x^*, \rho_t)\|^2 \\
\E \left[ \|x_t - x^*\|^2\right]
	&\leq \|x_{t-1} - x^*\|^2 - 2 \eta_t \langle \nabla f(x_{t-1}), x_{t-1} - x^* \rangle \\
	&\quad + 2\eta_t^2 \E_{i_t, \rho_t} \left[\|\nabla \f_{i_t}(x_{t-1}, \rho_t) - \nabla \f_{i_t}(x^*, \rho_t)\|^2 \right] + 2\eta_t^2 \E_{i_t, \rho_t} \left[ \|\nabla \f_{i_t}(x^*, \rho_t)\|^2 \right] \\
	(*)&\leq \|x_{t-1} - x^*\|^2 - 2 \eta_t \left( f(x_{t-1}) - f(x^*) + \frac{\mu}{2} \|x_{t-1} - x^*\|^2 \right) \\
	&\quad + 4L\eta_t^2 \left(f(x_{t-1}) - f(x^*) \right) + 2\eta_t^2 \sigtot^2 \\
	&= (1 - \mu \eta_t) \|x_{t-1} - x^*\|^2 - 2 \eta_t (1 - 2L \eta_t) (f(x_{t-1}) - f(x^*)) + 2\eta_t^2 \sigtot^2,
\end{align*}
where the expectation is taken with respect to the filtration  $\mathcal{F}_{t-1}$ and
the inequality~$(*)$ follows from the strong convexity of~$f$ and $\E_{i,\rho} [\|\nabla \f_{i_t}(x_{t-1}, \rho_t) - \nabla \f_{i_t}(x^*, \rho_t)\|^2]$ is bounded by $2L(f(x_{t-1}) - f(x^*))$ as in the proof of Lemma~\ref{lemma:grad_norm}. When~$\eta_t \leq 1/2L$, the second term is non-positive and we obtain the desired result after taking total expectations.
\end{proof}
Note that when $\eta_t \leq 1/4L$, we have
\[
\E \left[ \|x_t - x^*\|^2 \right] \leq (1 - \mu \eta_t) \E \left[ \|x_{t-1} - x^*\|^2 \right] - \eta_t (f(x_{t-1}) - f(x^*)) + 2\eta_t^2 \sigtot^2.
\]
This takes a similar form to Eq.~\eqref{eq:c_rec_avg}, and one can use the same iterate averaging scheme as Theorem~\ref{thm:averaging} with step-sizes $\eta_t = 2/\mu(\gamma + t)$ by adapting the proof.

We now give a similar recursion for the proximal SGD algorithm (see, \eg,~\cite{duchi2009efficient}). This allows us to apply the results of Theorem~\ref{thm:smiso_decaying} and the step-size strategy mentioned in Section~\ref{sec:convergence_analysis}.
\begin{appxproposition}[Simple recursion for proximal SGD with variance at optimum]
\label{prop:sgd_rec_prox}
Under assumptions (A1) and (A2), if~$\eta_t \leq 1/2L$, then the proximal SGD recursion $$x_t := \prox_{\eta_t h} (x_{t-1} - \eta_t \nabla \f_{i_t}(x_{t-1}, \rho_t))$$ satisfies
\begin{equation*}
B_t \leq (1 - \mu \eta_t) B_{t-1} + \eta_t^2 \sigtot^2,
\end{equation*}
where $B_t := \frac{1}{2} \E [ \|x_t - x^* \|^2 ]$ and $\sigtot$ is such that
\[
\E_{i,\rho} \left[ \| \nabla \f_i(x^*, \rho) - \nabla f(x^*) \|^2 \right] \leq \sigtot^2.
\]
\end{appxproposition}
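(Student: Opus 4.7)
The plan is to mirror the non-proximal argument, replacing the inequality $\|x_t-x^*\|^2 = \|x_{t-1}-x^* - \eta_t \nabla \f_{i_t}(x_{t-1},\rho_t)\|^2$ by a non-expansiveness step that uses the fixed-point characterization of $x^*$ in the composite setting. Specifically, the first-order optimality of $x^*$ gives $x^* = \prox_{\eta h}(x^* - \eta \nabla f(x^*))$ for every $\eta > 0$. Combined with the non-expansiveness of $\prox_{\eta_t h}$, the SGD update yields
\[
\|x_t - x^*\|^2 \leq \|x_{t-1} - x^* - \eta_t (\nabla \f_{i_t}(x_{t-1},\rho_t) - \nabla f(x^*))\|^2.
\]
This is the key replacement for $h \neq 0$; everything else becomes a controlled expansion and uses exactly the same building blocks as Proposition~\ref{prop:sgd_rec}.

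Next, I would expand the right-hand side as the sum of three terms and take the conditional expectation with respect to $\mathcal{F}_{t-1}$. For the cross term I use $\E[\nabla \f_{i_t}(x_{t-1},\rho_t)] = \nabla f(x_{t-1})$, leaving the inner product $\langle x_{t-1}-x^*, \nabla f(x_{t-1}) - \nabla f(x^*)\rangle$. Applying $\mu$-strong convexity of $f$ twice (once at each of $x_{t-1}, x^*$ and summing) produces a $\frac{\mu}{2}\|x_{t-1}-x^*\|^2$ decrease plus the ``Bregman-type'' surplus $f(x_{t-1}) - f(x^*) - \langle \nabla f(x^*), x_{t-1}-x^*\rangle$, which is $\geq 0$ by convexity. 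For the squared-gradient term I split
\[
\nabla \f_{i_t}(x_{t-1},\rho_t) - \nabla f(x^*) = \bigl(\nabla \f_{i_t}(x_{t-1},\rho_t) - \nabla \f_{i_t}(x^*,\rho_t)\bigr) + \bigl(\nabla \f_{i_t}(x^*,\rho_t) - \nabla f(x^*)\bigr),
\]
use the $2\|a\|^2+2\|b\|^2$ inequality, bound the second piece in expectation by $\sigtot^2$ (the new definition), and bound the first piece via co-coercivity of $\nabla \f_i(\cdot,\rho)$ exactly as in the proof of Lemma~\ref{lemma:grad_norm} (the $L$-smoothness relation $g(y) \geq g(x) + \langle \nabla g(x), y-x\rangle + \frac{1}{2L}\|\nabla g(y)-\nabla g(x)\|^2$ applied to $\f_i(\cdot,\rho)$ at $x_{t-1}, x^*$), yielding $2L$ times the same Bregman surplus $f(x_{t-1}) - f(x^*) - \langle \nabla f(x^*), x_{t-1}-x^*\rangle$.

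Assembling these pieces gives the inequality
\[
\E[\|x_t-x^*\|^2] \leq (1-\mu\eta_t)\|x_{t-1}-x^*\|^2 - 2\eta_t(1-2L\eta_t)\bigl(f(x_{t-1})-f(x^*)-\langle \nabla f(x^*), x_{t-1}-x^*\rangle\bigr) + 2\eta_t^2 \sigtot^2.
\]
The Bregman surplus is non-negative by convexity, so under $\eta_t \leq 1/2L$ the middle term drops out; dividing by two and taking total expectations yields $B_t \leq (1-\mu\eta_t) B_{t-1} + \eta_t^2 \sigtot^2$.

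The only real subtlety compared to Proposition~\ref{prop:sgd_rec} is that $\nabla f(x^*)$ need not vanish when $h\neq 0$, so one cannot use $f(x_{t-1})-f(x^*)$ directly as the ``progress'' quantity; the fix is to carry the first-order surplus $f(x_{t-1}) - f(x^*) - \langle \nabla f(x^*), x_{t-1}-x^*\rangle$ throughout, which is precisely the quantity bounded on both sides by the strong-convexity and co-coercivity inequalities. Once this is recognized the rest is routine algebra.
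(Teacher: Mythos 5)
Your proposal is correct and follows essentially the same route as the paper's proof: the fixed-point characterization $x^* = \prox_{\eta_t h}(x^* - \eta_t \nabla f(x^*))$ combined with non-expansiveness of the proximal operator, the same splitting of $\nabla \f_{i_t}(x_{t-1},\rho_t) - \nabla f(x^*)$, and the same use of strong convexity and co-coercivity to bound everything in terms of the Bregman surplus $f(x_{t-1}) - f(x^*) - \langle \nabla f(x^*), x_{t-1} - x^*\rangle$, whose non-negativity kills the middle term when $\eta_t \leq 1/2L$. You also correctly identified the one genuine subtlety relative to Proposition~\ref{prop:sgd_rec}, namely that $\nabla f(x^*)$ no longer vanishes in the composite case.
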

\begin{proof}
We have
\begin{align*}
   & \|x_t - x^*\|^2 \\ &= \|\prox_{\eta_t h}(x_{t-1} - \eta_t \nabla \f_{i_t}(x_{t-1}, \rho_t)) - \prox_{\eta_t h}(x^* - \eta_t \nabla f(x^*)) \|^2\\
	&\leq \|x_{t-1} - \eta_t \nabla \f_{i_t}(x_{t-1}, \rho_t) - x^* + \eta_t \nabla f(x^*) \|^2\\
	&= \|x_{t-1} - x^*\|^2 - 2 \eta_t \langle \nabla \f_{i_t}(x_{t-1}, \rho_t) - \nabla f(x^*), x_{t-1} - x^* \rangle + \eta_t^2 \|\nabla \f_{i_t}(x_{t-1}, \rho_t) - \nabla f(x^*)\|^2 \\
	&\leq \|x_{t-1} - x^*\|^2 - 2 \eta_t \langle \nabla \f_{i_t}(x_{t-1}, \rho_t) - \nabla f(x^*), x_{t-1} - x^* \rangle \\
	&\quad + 2\eta_t^2 \|\nabla \f_{i_t}(x_{t-1}, \rho_t) - \nabla \f_{i_t}(x^*, \rho_t)\|^2 + 2\eta_t^2 \|\nabla \f_{i_t}(x^*, \rho_t) - \nabla f(x^*)\|^2,
\end{align*}
where the first equality follows from the optimality of~$x^*$ and the following inequality follows from the non-expansiveness of proximal operators. Taking conditional expectations on~$\mathcal{F}_{t-1}$ yields
\begin{align*}
   &\E \left[ \|x_t - x^*\|^2 | {\mathcal F}_{t-1} \right] \\
	&\leq \|x_{t-1} - x^*\|^2 - 2 \eta_t \langle \nabla f(x_{t-1}) - \nabla f(x^*), x_{t-1} - x^* \rangle \\
	&\quad + 2\eta_t^2 \E_{i_t, \rho_t} \left[\|\nabla \f_{i_t}(x_{t-1}, \rho_t) - \nabla \f_{i_t}(x^*, \rho_t)\|^2 \right] + 2\eta_t^2 \E_{i_t, \rho_t} \left[ \|\nabla \f_{i_t}(x^*, \rho_t) - \nabla f(x^*)\|^2 \right] \\
	(*)&\leq \|x_{t-1} - x^*\|^2 - 2 \eta_t \left( f(x_{t-1}) - f(x^*) + \frac{\mu}{2} \|x_{t-1} - x^*\|^2 - \langle \nabla f(x^*), x_{t-1} - x^* \rangle \right) \\
	&\quad + 4L\eta_t^2 \left(f(x_{t-1}) - f(x^*) - \langle \nabla f(x^*), x_{t-1} - x^* \rangle \right) + 2\eta_t^2 \sigtot^2 \\
	&= (1 \!-\! \mu \eta_t) \|x_{t-1} \!-\! x^*\|^2 - 2 \eta_t (1 \!-\! 2L \eta_t) (f(x_{t-1}) \!-\! f(x^*) \!-\! \langle \nabla f(x^*), x_{t-1} - x^* \rangle ) + 2\eta_t^2 \sigtot^2,
\end{align*}
where inequality~$(*)$ follows from the $\mu$-strong convexity of~$f$ and $\E_{i,\rho} [\|\nabla \f_{i_t}(x_{t-1}, \rho_t) - \nabla \f_{i_t}(x^*, \rho_t)\|^2]$ is bounded by $2L(f(x_{t-1}) - f(x^*) - \langle \nabla f(x^*), x_{t-1} - x^* \rangle)$ as in the proof of Lemma~\ref{lemma:grad_norm_nonu}. By convexity of~$f$, we have $f(x_{t-1}) - f(x^*) - \langle \nabla f(x^*), x_{t-1} - x^* \rangle \geq 0$, hence the second term is non-positive when $\eta_t \leq 1/2L$. We conclude by taking total expectations.
\end{proof}

We note that Propositions~\ref{prop:sgd_rec} and~\ref{prop:sgd_rec_prox} can be easily adapted to non-uniform sampling with sampling distribution~$q$ and step-sizes $\eta_t / q_{i_t} n$, leading to step-size conditions $\eta_t \leq 1 / 2L_q$, with $L_q = \max_i \frac{L_i}{q_i n}$ and variance $\sigma_{q,tot}^2 = \E_{i,\rho} [ \frac{1}{(q_i n)^2}\| \nabla \f_i(x^*, \rho) - \nabla f(x^*) \|^2 ]$.

\section{Experiments with Averaging Scheme}
\label{sec:avg_experiments}

\begin{figure*}[bt!]
  \begin{center}
    \includegraphics[width=.32\textwidth]{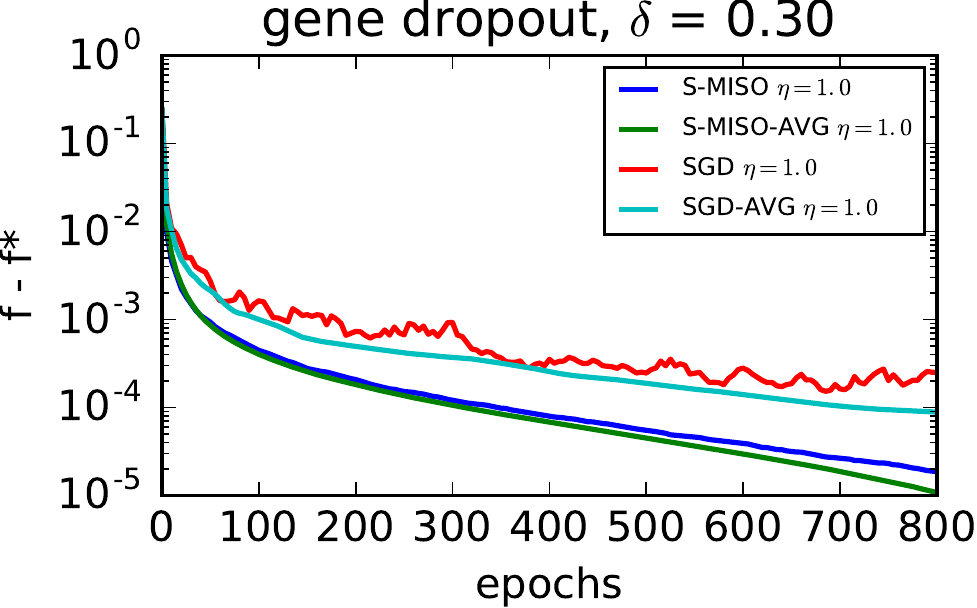}
    \includegraphics[width=.32\textwidth]{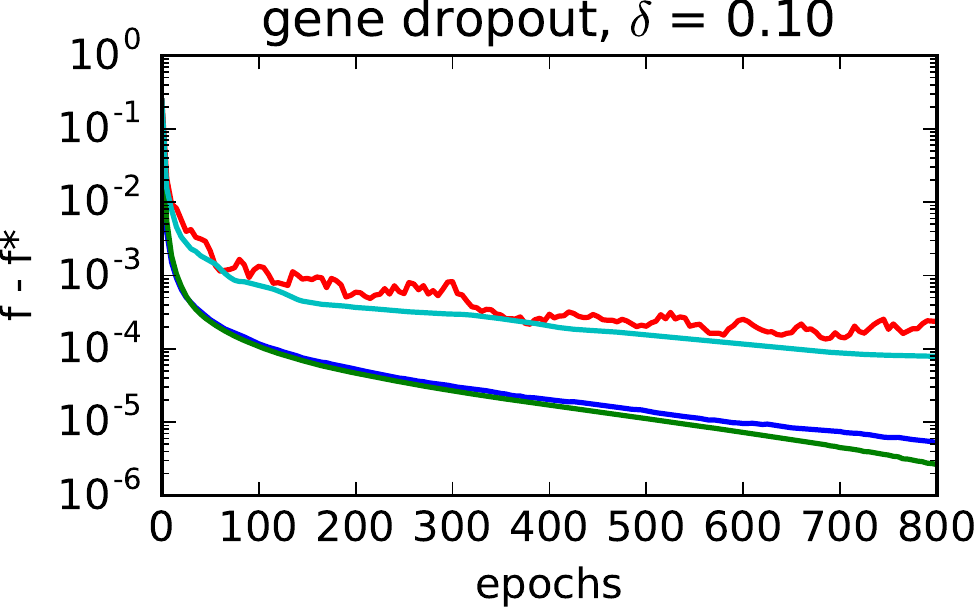}
    \includegraphics[width=.32\textwidth]{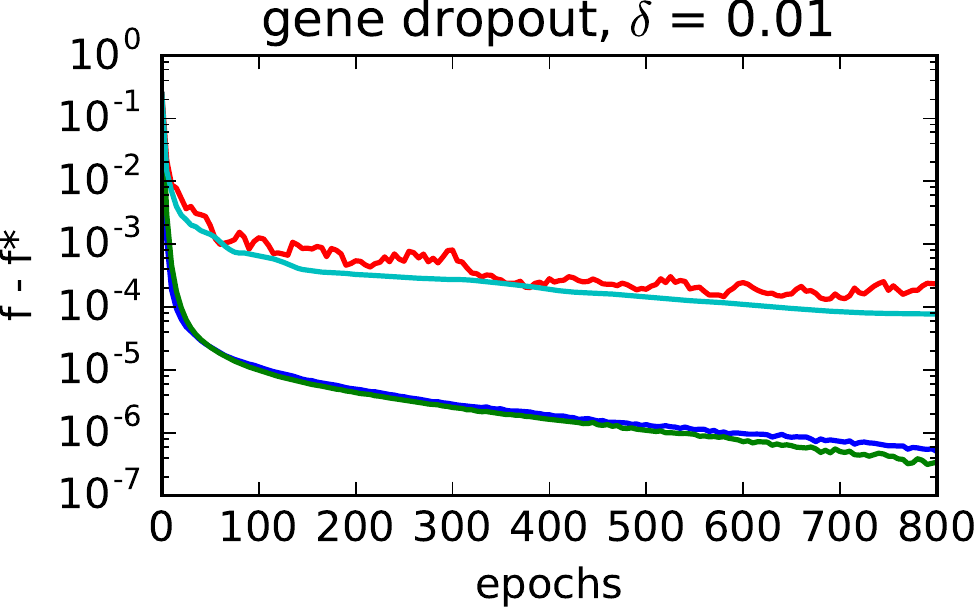}\\
    $\mu = 3\cdot 10^{-3}$\\
    \vspace{0.3cm}
    \includegraphics[width=.32\textwidth]{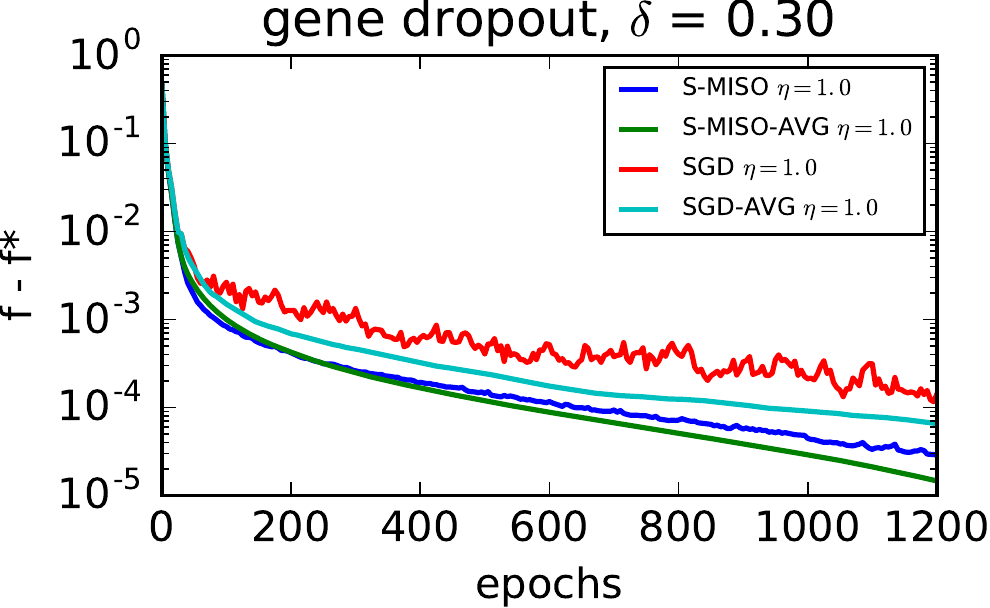}
    \includegraphics[width=.32\textwidth]{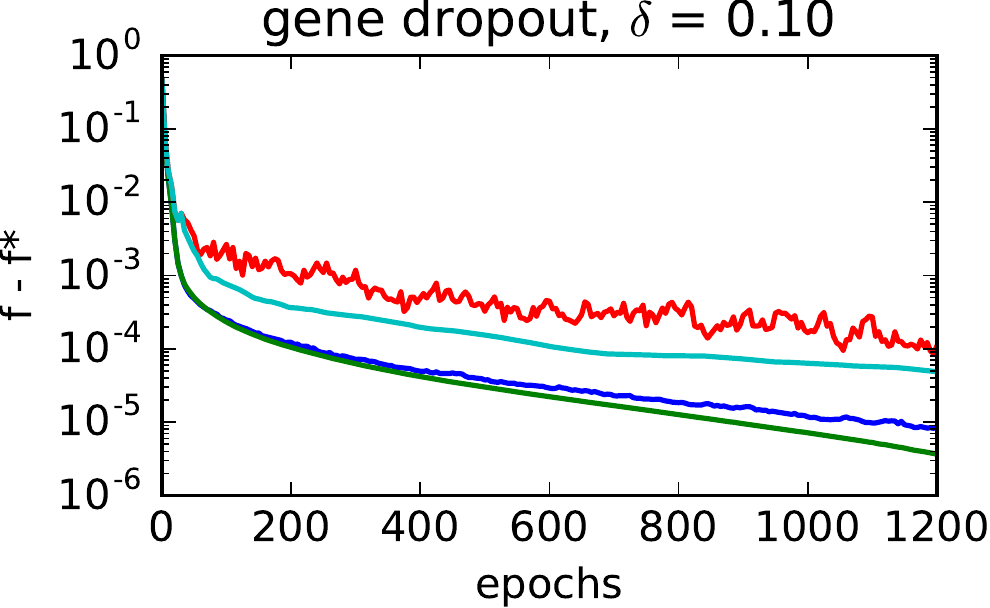}
    \includegraphics[width=.32\textwidth]{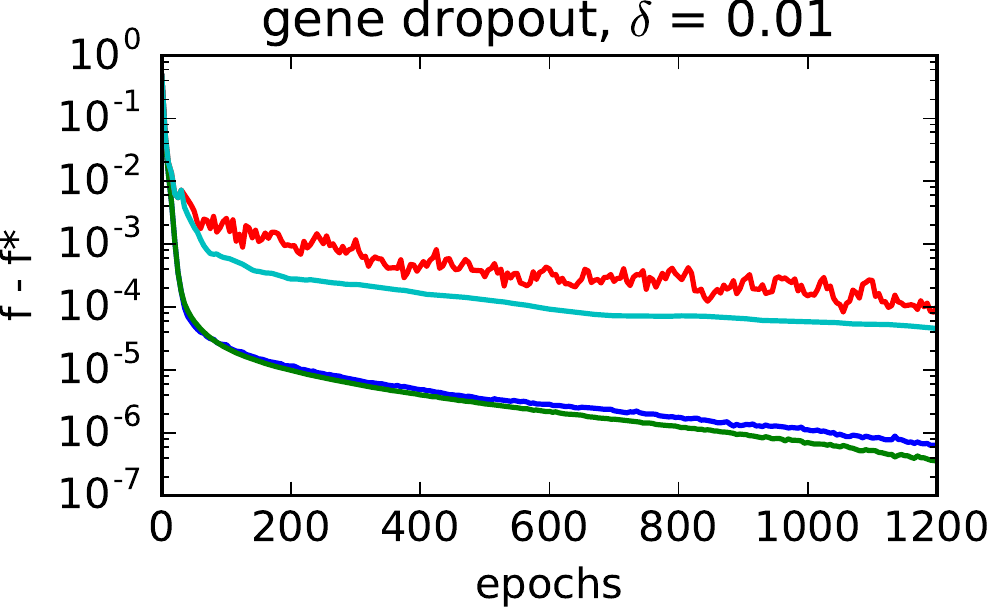}\\
    $\mu = 3\cdot 10^{-4}$\\
  \end{center}
  \caption{Comparison of S-MISO and SGD with averaging, for different condition numbers (controlled by~$\mu$) and different Dropout rates~$\delta$. We begin step-size decay and averaging at epoch 3 (top) and 30 (bottom).}
  \label{fig:avg_dropout}
\end{figure*}

Figure~\ref{fig:avg_dropout} shows a comparison of S-MISO and SGD with the averaging scheme proposed in Theorem~\ref{thm:averaging} (see Appendix~\ref{sec:sgd_recursions} for comments on how it applies to SGD), on the breast cancer dataset presented in Section~\ref{sec:experiments}, for different values of the regularization~$\mu$ (and thus of the condition number $\kappa = L/\mu$), and Dropout rates~$\delta$. We can see that the averaging scheme gives some small improvements for both methods, and that the improvements are more significant when the problem is more ill-conditioned (Figure~\ref{fig:avg_dropout}, bottom). We note that the time at which we start averaging can have significant impact on the convergence, in particular, starting too early can significantly slow down the initial convergence, as commonly noticed for stochastic gradient methods (see, \eg,~\cite{nemirovski_robust_2009}).

\end{document}